\documentclass{article}

\usepackage[preprint]{neurips_2026}

\usepackage[utf8]{inputenc} 
\usepackage[T1]{fontenc}    
\usepackage{hyperref}       
\usepackage{url}            
\usepackage{booktabs}       
\usepackage{amsfonts}       
\usepackage{nicefrac}       
\usepackage{microtype}      
\usepackage{xcolor}         

\usepackage{amsmath}
\usepackage{amsthm}
\newtheorem{lemma}{Lemma}[section]
\usepackage{subcaption}
\usepackage{graphicx}
\usepackage[svgnames]{xcolor}
\usepackage[table]{xcolor}
\usepackage{enumitem}
\usepackage{multirow}
\usepackage{tcolorbox}
\usepackage{url}
\usepackage{hyperref}
\usepackage{pifont}

\definecolor{tickgreen}{HTML}{59A14F}
\newcommand{\cmark}{\textcolor{tickgreen}{\ding{51}}}
\definecolor{xred}{HTML}{e15759}
\newcommand{\xmark}{\textcolor{xred}{\ding{55}}}
\usepackage[normalem]{ulem}

\title{DiRotQ: Rotation-Aware Quantization for 4-bit Diffusion Transformers}

\author{%
    Sayeh Sharify \\
    d-Matrix \\
    Santa Clara, CA 95054 \\
    sayehs@d-matrix.ai \\
    \And
    Mahsa Salmani \\
    d-Matrix \\
    Santa Clara, CA 95054 \\
    msalmani@d-matrix.ai \\
    \And
    Hesham Mostafa \\
    d-Matrix \\
    Santa Clara, CA 95054 \\
    hmostafa@d-matrix.ai \\
}

\begin{document}

\maketitle

\begin{abstract}
\hyphenpenalty=10000
\exhyphenpenalty=10000
Diffusion Transformers (DiTs) achieve state-of-the-art image generation quality but incur substantial memory and computational costs at inference. While aggressive Post-Training Quantization (PTQ) to 4-bit precision offers significant efficiency gains, it typically results in severe quality degradation. Existing approaches, including smoothing-based methods, mixed-precision schemes, rotation techniques, and low-rank residual methods, partially mitigate this issue but still leave a noticeable gap to FP16/BF16 performance. In this work, we introduce \emph{DiRotQ}, a W4A4 PTQ framework that mitigates this degradation through \mbox{rotation-aware} activation quantization. DiRotQ identifies a low-rank subspace capturing dominant activation variance via Principal Component Analysis (PCA), preserving coefficients in this subspace at higher precision while quantizing the remaining components to 4-bit. Activations are rotated into the PCA basis at inference time using calibration-derived orthogonal transformations, while the inverse rotation is fused into the layer weights offline. Combined with GPTQ-based~\citep{frantar2023gptq} weight quantization, DiRotQ achieves an FID $(\downarrow)$ of $15.9$ and PSNR $(\uparrow)$ of $19.1$ dB on PixArt-$\Sigma$~\citep{chen2024pixart} over the MJHQ-30K~\citep{li2024playground} dataset, outperforming the prior state-of-the-art SVDQuant~\citep{li2025svdquant} \mbox{(FID $18.9$, PSNR $17.6$)} under the same INT W4A4 setting. Beyond standard metrics, we introduce a \emph{VLM-as-a-Judge} evaluation protocol for diffusion model quantization, the first such evaluation in this setting, providing a more holistic assessment of perceptual quality and prompt alignment under aggressive compression. On the systems side, we implement a\emph{ Triton-based~\citep{tillet2019triton} custom kernel} to enable efficient end-to-end inference, reducing memory usage of the 12B FLUX.1-dev model by $2.1\times$ and delivering $2.3\times$ speedup over the BF16 baseline, on a 24 GB RTX 4090 GPU. 
Our codebase is available \href{https://github.com/sayehs-dmatrix/DiRotQ}{here}.


\end{abstract}

\section{Introduction}
\label{sec:introduction}

\begin{figure*}[t!]
    \centering
    \includegraphics[scale=0.58]{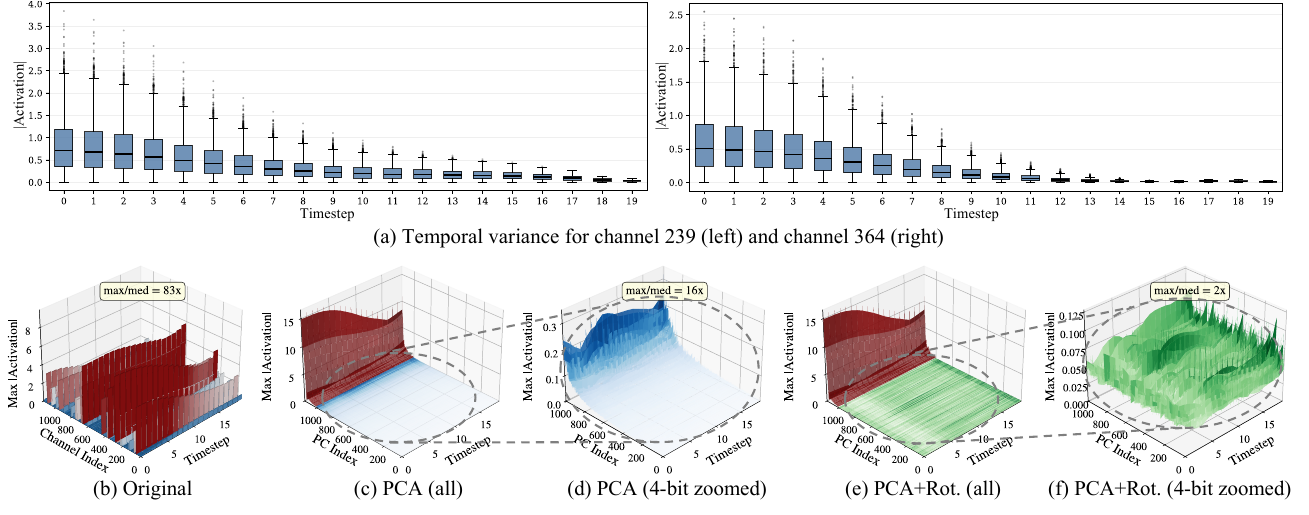}
    
    \vspace*{-3mm}
    \caption{PixArt-$\Sigma$~\citep{chen2024pixart} Block 2, self-attention input activation distributions across $20$ denoising timesteps. \textbf{(a)}~Temporal variance for channels 239 and 364. \textbf{(b)}~Original activations show severe channel-wise variance with large outliers ($\nicefrac{\max}{\text{median}}=83\times$). \textbf{(c, d)}~PCA isolates outliers in a \textcolor{DarkRed}{$16$-bit} subspace (\textcolor{DarkRed}{red}), reducing the ratio in the remaining \textcolor{SteelBlue}{$4$-bit} channels (\textcolor{SteelBlue}{blue}) to $16\times$. \textbf{(e, f)}~Rotation further equalizes the \textcolor{SeaGreen}{$4$-bit} subspace (\textcolor{SeaGreen}{green}), bringing the ratio down to $2\times$.}
    \label{fig:model_inference}
    \vspace*{-6mm}
\end{figure*}

Diffusion models~\citep{sohl2015deep} have become the dominant generative framework for visual, audio, and multimodal generation, achieving state-of-the-art performance in text-to-image~\citep{avrahami2022blended, saharia2022photorealistic, balaji2022ediff, ye2023ip}, image synthesis~\citep{podell2023sdxl, chang2022maskgit}, and audio generation~\citep{kong2020diffwave}. Despite their strong performance, diffusion models are computationally and memory intensive, requiring iterative denoising over many timesteps, with each step incurring additional cost from the quadratic complexity of self-attention over image patch tokens. Post-Training Quantization (PTQ) reduces memory footprint and inference latency with minimal retraining cost~\citep{shang2023post, he2023ptqd, li2025svdquant}.
As shown in prior work~\citep{li2025svdquant}, diffusion models remain compute-bound even at small batch sizes, making weight-only quantization insufficient for speedup. Both weights and activations must use the same precision; otherwise, weights are upcast during computation, negating any performance gains.
While prior work has explored low-bit quantization, a gap to full-precision baselines persists under aggressive settings. In this work, we focus on jointly quantizing activations and weights to enable efficient low-bit inference while narrowing this gap.

We propose \textit{DiRotQ}, a PCA-based rotation-aware quantization framework that aligns the quantization basis with the activation distribution to enable accurate W4A4 inference. As shown in Figure~\ref{fig:model_inference}, raw DiT activations exhibit pronounced timestep-(Figure~\ref{fig:model_inference}a) and channel-dependent (Figure~\ref{fig:model_inference}b) outliers, making low-bit quantization highly error-prone. SVDQuant~\citep{li2025svdquant} addresses this by shifting activation outliers into the weights and applying low-rank decomposition; however, because it operates in weight space, SVDQuant only partially captures activation statistics. In contrast, DiRotQ operates directly in activation space, rotating activations into their PCA basis to concentrate outlier energy into a small high-variance subspace that is retained in high precision. The remaining components become significantly more uniform across channels and timesteps (max/median $\approx 16\times$ vs. $\approx 83\times$), and an additional orthogonal rotation further equalizes this residual subspace (max/median $\approx 2\times$), enabling efficient 4-bit quantization (Figure~\ref{fig:model_inference}f). This improved conditioning translates into higher quantization fidelity: as shown in Figures~\ref{fig:qsnr_sa_out} and ~\ref{fig:qsnr-appendix}, DiRotQ substantially improves activation QSNR by $5$\texttt{–}$10$ dB over SVDQuant and RTN across transformer blocks and denoising timesteps.

Although rotation-based quantization has been highly effective for LLMs (e.g., QuaRot~\citep{ashkboos2024quarot}, ResQ~\citep{saxena2024resq}, SpinQuant~\citep{liu2025spinquant}), prior work~\citep{li2025svdquant} argued it is inapplicable to diffusion models. First, due to adaptive normalization layers (e.g., AdaLN~\citep{peebles2023scalable}), rotation cannot be absorbed into preceding weights, requiring explicit handling during inference. Second, diffusion models exhibit strong timestep-dependent variation in activation magnitudes, making naive cross-layer sharing of rotation matrices ineffective. DiRotQ addresses both challenges with a tailored design: rotations are computed per layer to respect layer-specific statistics, and a single time-averaged PCA basis per layer is shared across timesteps, with timestep-dependent magnitude shifts absorbed by per-token activation scales and a small high-precision tail. This design keeps only the forward rotation online while fusing its inverse into the static weights, making rotation-based W4A4 quantization practical and effective for DiTs.


We evaluate DiRotQ on three state-of-the-art DiT models, PixArt-$\Sigma$~\citep{chen2024pixart}, FLUX.1-dev~\citep{flux2024}, and SANA-1.6B~\citep{xie2025sana}, across two diverse datasets (MJHQ-30K~\citep{li2024playground} and sDCI~\citep{urbanek2024picture}), reporting quality and similarity metrics. To better assess the impact of quantization on generation quality, we further adopt a \emph{VLM-as-a-Judge evaluation protocol}, which has recently emerged as a reliable alternative for assessing text-to-image models~\citep{chen2024mj, hayes2025finegrain, chen2024mllm, lee2024prometheus}. Unlike traditional methods, VLM-based judges jointly reason about visual quality and prompt alignment, making them well-suited for capturing subtle degradation introduced by low-bit quantization. To our knowledge, this is the first work to incorporate VLM-based evaluation for diffusion model quantization, enabling a more holistic assessment of generation quality under aggressive compression. We make the following contributions:
\begin{enumerate}[nosep, wide=0pt, leftmargin=*]
\item We propose DiRotQ, a W4A4 post-training quantization framework for DiTs that uses PCA-based activation rotation to decompose activations into a high-variance subspace retained in higher precision and a low-variance subspace quantized to 4-bit, substantially reducing quantization error.
\item We adopt a VLM-as-a-Judge evaluation protocol for diffusion model quantization to provide a more holistic assessment of perceptual quality and prompt alignment.
\item We implement a custom Triton-based kernel for efficient rotation and achieve runtime speedup and memory reduction on an NVIDIA RTX 4090 GPU with our quantized models.
\end{enumerate}

\section{Related work}
\label{sec:related_work}
\begin{figure*}[t!]
    \centering
    \includegraphics[scale=0.46]{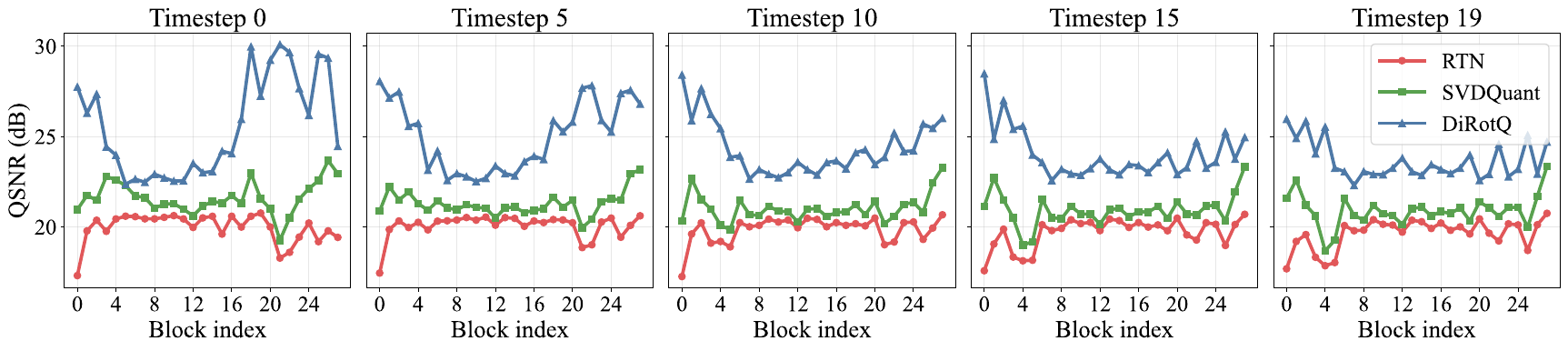}
    
    \vspace*{-2mm}
     \caption{Per-block activation QSNR for the \textit{self-attention output projection} across $28$ PixArt-$\Sigma$ transformer blocks at five timesteps. DiRotQ outperforms RTN and SVDQuant~\citep{li2025svdquant} by $5$\texttt{–}$10$ dB consistently across blocks and timesteps.}
     \label{fig:qsnr_sa_out}
     \vspace*{-4mm}
\end{figure*}

\vspace*{-2mm}
\paragraph{Diffusion models} Diffusion models~\citep{sohl2015deep, ho2020denoising} generate samples via iterative denoising and have achieved strong performance in text-to-image tasks~\citep{balaji2022ediff, rombach2022high, podell2023sdxl}. Early diffusion models primarily relied on UNet-based architectures~\citep{ronneberger2015u}. More recent work has explored transformer-based designs, leading to Diffusion Transformers (DiTs)~\citep{peebles2023scalable, bao2023all}, which improve scalability and generation quality in models such as PixArt~\citep{chen2023pixart, chen2024pixart}, FLUX~\citep{flux2024}, and Stable Diffusion 3~\citep{esser2024scaling}. Existing efforts have explored accelerating diffusion models through faster samplers~\citep{song2020denoising, lu2022dpm, lu2025dpm}, distillation~\citep{salimans2022progressive, song2023improved, luo2023latent, sauer2024adversarial, yin2024one}, and system-level optimizations, including efficient architectures~\citep{kim2024bk, li2023snapfusion}, sparsification \citep{wang2025sparsedm, yuan2024ditfastattn}, computation optimization via caching and token reduction methods~\citep{ma2024deepcache, bolya2022token}, and quantization~\citep{shang2023post, he2023ptqd, li2025svdquant}. In this work, we focus on quantization, which provides an effective trade-off between compression and generation quality, reducing memory and latency with minimal degradation.

\vspace*{-2mm}
\paragraph{Quantization} Post-Training Quantization (PTQ) reduces the precision of weights and activations to improve efficiency without retraining. In LLMs, methods such as GPTQ~\citep{frantar2023gptq} and AWQ~\citep{lin2024awq} focus on weight quantization, while SmoothQuant~\citep{xiao2023smoothquant} and LLM.int8()~\citep{dettmers2022gpt3} target both activation and weight quantization and address activation outliers via scaling or mixed precision. Rotation-based approaches improve activation quantization by suppressing outliers through orthogonal transformations. QuaRot~\citep{ashkboos2024quarot} applies fixed Hadamard rotations to enable end-to-end 4-bit quantization, while TurboQuant~\citep{turboquant2026} leverages random orthogonal rotations for KV cache quantization. ResQ~\citep{saxena2024resq} employs PCA-based rotations to decompose weights and activations into high- and low-variance subspaces, enabling mixed-precision quantization that preserves salient components and SpinQuant~\citep{liu2025spinquant} learns rotation matrices to reduce the gap to full precision. Despite their effectiveness in LLMs, these approaches do not directly extend to diffusion models, where the iterative denoising process introduces pronounced timestep-dependent variation in activation statistics.


Unlike LLMs with relatively stable distributions, diffusion models exhibit large shifts across timesteps, making static quantization ineffective. Early works such as Q-Diffusion~\citep{li2023q-diffusion} and PTQ4DM~\citep{shang2023post} address this via timestep-aware calibration, while PTQD~\citep{he2023ptqd} accounts for error accumulation across denoising steps. In addition to timestep-dependent variation, the adoption of DiTs~\citep{peebles2023scalable} introduces further challenges, including token-wise spatial variance, channel-wise outliers, and condition-wise variation from Classifier-Free Guidance~\citep{ho2022classifier}. Recent methods tackle these issues from different angles: PTQ4DiT~\citep{wu2024ptq4dit} and DiTAS~\citep{dong2025ditas} mitigate outliers via transformation and smoothing, \mbox{ViDiT-Q}~\citep{zhao2025viditq} and DGQ~\citep{ryu2025dgq} introduce fine-grained dynamic quantization, and Q-DiT~\citep{chen2025q} adapts quantization granularity based on activation statistics. 
FP4DiT~\citep{chen2025fp4dit} explores floating-point quantization (W4A6) on DiTs to better align with non-uniform weight/activation distributions and MSFP~\citep{zhao2025msfp} pursues 4-bit FP quantization through timestep-aware LoRA fine-tuning. Rotation-based PTQ methods such as ConvRot~\citep{huang2025convrot} and LRQ-DiT~\citep{yang2025lrq} use Hadamard and log-rotation transforms to suppress activation outliers in DiTs under low-bit settings. More recently, SVDQuant~\citep{li2025svdquant} handles outliers by separating them into a high-precision low-rank branch while quantizing the residual to 4-bit precision, enabling end-to-end W4A4 inference. 

Despite these advances, a gap to full-precision performance remains under aggressive low-bit settings, as existing methods address individual challenges in isolation. For example, SVDQuant~\citep{li2025svdquant} at INT4, applied to PixArt-$\Sigma$~\citep{chen2024pixart}, exhibits $\sim14\%$ higher FID score than the $16$-bit floating point baseline, even after nontrivial optimization. In this work, we propose a PCA-based approach that jointly captures timestep-, token-, and channel-wise variations for accurate and efficient 4-bit quantization.
\section{Quantization preliminaries} 
\label{sec:quantization_preliminary}


Quantization is a model compression technique that reduces the numerical precision of tensor elements, typically from 32-bit or 16-bit floating-point (FP32, FP16, BF16) to lower-bit representations such as 8-bit (INT8, FP8), or even 4-bit (INT4, NVFP4, MXFP4) formats, enabling more efficient computation and storage. Formally, given a full-precision tensor $\mathbf{X}\in \mathbb{R}^n$, $k$-bit quantization maps $\mathbf{X}$ to a lower-precision representation $\hat{\mathbf{X}}$ through a quantization function $Q(\cdot)$: \vspace{-1mm} 
{\small
\begin{equation}
\hat{\mathbf{X}} = Q(\mathbf{X}) = \left\lfloor \frac{\mathbf{X} - z}{s} \right\rceil \cdot s + z,
\label{eq:quant}
\end{equation}}
where $\lfloor \cdot \rceil$ denotes a round-and-clip operation mapping values to $[q_{\min}, q_{\max}]$, $s$ is the scale, and $z$ is the zero-point. For symmetric integer quantization, $z = 0$ and $s = \max(|\mathbf{X}|)/q_{\max}$ with $q_{\max} = 2^{k-1}-1$; for asymmetric quantization, $z = \min(\mathbf{X})$ and $s = (\max(\mathbf{X}) - \min(\mathbf{X}))/(q_{\max} - q_{\min})$ with $q_{\min} = 0$ and $q_{\max} = 2^k - 1$. For floating-point formats, $q_{\max}$ depends on exponent and mantissa bits (e.g., a 4-bit format with 1-bit mantissa and 2-bit exponent yields $q_{\max} = 6$). Following prior work~\citep{li2025svdquant}, we use symmetric quantization for weights and asymmetric for activations. For simplicity, throughout the paper we formulate quantization in the symmetric form (i.e., $z = 0$); the asymmetric case follows analogously with an additional $z$ term. Following prior work~\citep{lin2025qserve, li2025svdquant}), we denote quantization formats as $\mathbf{W}b_w\mathbf{A}b_a$ where $b_w$ and $b_a$ denote the bit widths of weights and activations, respectively, and use \textit{INT} and \textit{FP} to represent integer and floating-point data types.

\section{Method}
\label{sec:method}
\subsection{Problem formulation}
Consider a linear layer $\mathbf{Y} = \mathbf{XW}$ where $\mathbf{X} \in \mathbb{R}^{b \times m}$ is the input activation and $\mathbf{W} \in \mathbb{R}^{m \times n}$ is the weight matrix; $b$ represents the batch size, and $m$ and $n$ denote the input and output channels, respectively. We aim to quantize both weights ($b_w$ bits) and activations ($b_a$ bits) while minimizing the output error. Following ViDiT-Q~\citep{zhao2025viditq}, we approximate this objective by minimizing the layer-wise quantization error of the weights $\mathbf{W}$ and activations $\mathbf{X}$:
{\small
\begin{equation}
\min_{Q(\mathbf{W}^{(l)}),\, Q(\mathbf{X}^{(l)})} \sum_{l}^{L}
\Big(\; \underbrace{\| \mathbf{W}^{(l)} - Q(\mathbf{W}^{(l)}) \|_F^2}_{\text{weight quantization error}} + \underbrace{\| \mathbf{X}^{(l)} - Q(\mathbf{X}^{(l)}) \|_F^2}_{\text{activation quantization error}} \Big),
\label{eq:layerwise}
\end{equation}}
where $\mathbf{W}^{(l)}$ and $\mathbf{X}^{(l)}$ represent the weights and activations of layer $l$, respectively.

As shown by prior work~\citep{xiao2023smoothquant, lin2024awq, zhao2025viditq}, weights are relatively easy to quantize and methods such as GPTQ~\citep{frantar2023gptq} handle them effectively. In contrast, activation quantization is more challenging, as discussed in Section~\ref{sec:introduction}, and in this work we focus on mitigating it. Note that Eq.~\ref{eq:layerwise} is a layer-wise surrogate adopted for clarity; in practice, our weight quantization uses GPTQ~\cite{frantar2023gptq}, which minimizes the activation-weighted output error $\|XW - X\hat{W}\|_F^2$.

\subsection{DiRotQ: PCA-guided rotated quantization}
To address the challenge of activation quantization, DiRotQ rotates activations $\mathbf{X}$ into the PCA basis $\mathbf{U}$, where $\mathbf{U} \in \mathbb{R}^{m \times m}$ is an orthogonal matrix satisfying $\mathbf{U}^\top \mathbf{U} = \mathbf{I}$. This transformation decorrelates channels and separates high- and low-variance components (see Appendix~\ref{sec:pca_details} for details).
{\small
\begin{equation}
\hat{\mathbf{X}} = \mathbf{X} \mathbf{U}, \qquad \mathbf{U} = \left[\mathbf{U}_h \;\; \mathbf{U}_l\right],  \qquad \mathbf{U} \in \mathbb{R}^{m \times m},
\end{equation}}
where $\mathbf{U}_h \in \mathbb{R}^{m \times k}$ contains the top-$k$ eigenvectors corresponding to the largest eigenvalues and captures the high-variance components, and $\mathbf{U}_l \in \mathbb{R}^{m \times (m-k)}$ spans the residual low-variance subspace. The top-$k$ high-variance components, $\hat{\mathbf{X}}_h = \mathbf{X}\mathbf{U}_h$, are retained in high precision (e.g., 16-bit) to preserve the dominant activation information, while the residual components, $\hat{\mathbf{X}}_l = \mathbf{X}\mathbf{U}_l$, are quantized to 4-bit precision. Here, $k$ denotes the rank of the high-precision subspace, controlling the number of components assigned higher precision. In practice, we set $k = rm$, where $r$ is a small ratio (e.g., $r=10\%$).

Within the low-precision residual subspace, although the variance is small, aggressive 4-bit quantization can still introduce significant error due to the limited number of quantization levels. To mitigate this, we apply a random orthogonal rotation $\mathbf{R} \in \mathbb{R}^{(m-k)\times(m-k)}$, yielding $\tilde{\mathbf{X}}_l = \hat{\mathbf{X}}_l \mathbf{R}$, which further equalizes variance across channels and improves the efficiency of a shared quantization scale. Note that since $\mathbf{U}_l$ has orthonormal columns and $\mathbf{R}$ is orthogonal, their composition $\mathbf{U}_l \mathbf{R}$ also forms an orthonormal basis for the $(m-k)$ dimensional residual subspace (See Appendix~\ref{app:proof} for proof).


Therefore, the combined rotation is $\mathbf{V} = \left[\mathbf{U}_h \;\; \mathbf{U}_l \mathbf{R}\right]$ and $\mathbf{X}\mathbf{W}$ can be approximated as:
{\small
\begin{equation}
\begin{split}
\mathbf{Y} = \mathbf{XW} = \mathbf{XVV}^\top \mathbf{W} &= \mathbf{X}\left[\mathbf{U}_h \;\; \mathbf{U}_l \mathbf{R}\right] \left[\mathbf{U}_h \;\; \mathbf{U}_l \mathbf{R}\right]^\top \mathbf{W} \\
&= \left[\mathbf{X}\mathbf{U}_h \;\; \mathbf{X}\mathbf{U}_l \mathbf{R}\right] \left[
\begin{array}{c}
\mathbf{U}_h^\top \\
(\mathbf{U}_l\mathbf{R})^\top
\end{array}
\right] \mathbf{W} \\
&= [\mathbf{X}\mathbf{U}_h \;\; \tilde{\mathbf{X}}_l] \left[
\begin{array}{c}
\mathbf{U}_h^\top \mathbf{W}\\
(\mathbf{U}_l\mathbf{R})^\top \mathbf{W}
\end{array}
\right] \\
  \end{split}
\end{equation}}
Since the rotation matrices $\mathbf{U}_l$ and $\mathbf{R}$, as well as the weight matrix $\mathbf{W}$, are static, the transpose of the combined rotation 
$(\mathbf{U}_l\mathbf{R})^\top$ can be absorbed into the weights offline: $\tilde{\mathbf{W}_l} = (\mathbf{U}_l\mathbf{R})^\top \mathbf{W}$. Similarly, $\mathbf{U}_h$ can be fused to $\mathbf{W}$ offline: $\tilde{\mathbf{W}}_h = \mathbf{U}_h^\top \mathbf{W}$, yielding the forward computation:
{\small
\begin{equation}
\begin{split}
\mathbf{Y} &= [\mathbf{X}\mathbf{U}_h \;\; \tilde{\mathbf{X}}_l] \left[\begin{array}{c}
\tilde{\mathbf{W}}_h\\
\tilde{\mathbf{W}_l}
\end{array} \right] \approx [\mathbf{X}\mathbf{U}_h \;\; Q(\tilde{\mathbf{X}}_l)] \left[\begin{array}{c}
\tilde{\mathbf{W}}_h\\
Q(\tilde{\mathbf{W}}_l)
\end{array} \right]
\end{split}
\label{eq:weight_fusion}
\end{equation}}
The forward rotation $\mathbf{X}\mathbf{V}$ remains \textit{online} because intermediate nonlinearities such as adaptive LayerNorm~\citep{peebles2023scalable}, softmax, or GELU~\citep{hendrycks2016gaussian}, in DiTs prevent fusing it into preceding weights, while the transpose rotation is fused into the same layer's weights at zero inference cost. 
Note that the PCA basis $\mathbf{U} = [\mathbf{U}_h \;\; \mathbf{U}_l]$ is computed per linear layer from calibration activations aggregated across all denoising timesteps, capturing the principal directions of the \emph{time-averaged activations.}
In contrast, $\mathbf{R}$ is \emph{data-independent}: for each distinct residual dimension $(m-k)$ in the network (e.g., attention hidden size, FFN intermediate size), we sample a random orthogonal matrix once using a fixed seed, and reuse it across all layers of that dimension. Both \emph{$\mathbf{U}$ and $\mathbf{R}$ are fixed across timesteps}. 
Sharing $\mathbf{U}$ across timesteps is required by offline weight fusion (Eq.~\ref{eq:weight_fusion}) as a per-timestep basis would scale weight memory by the number of denoising steps, negating low-bit savings. This design preserves high-variance components with minimal error while efficiently quantizing low-variance residuals, thereby reducing overall activation quantization error and maintaining model accuracy.

\section{Experiments}
\label{sec:experiments}

\begin{figure*}[t!]
    \centering
    \includegraphics[scale=0.61]{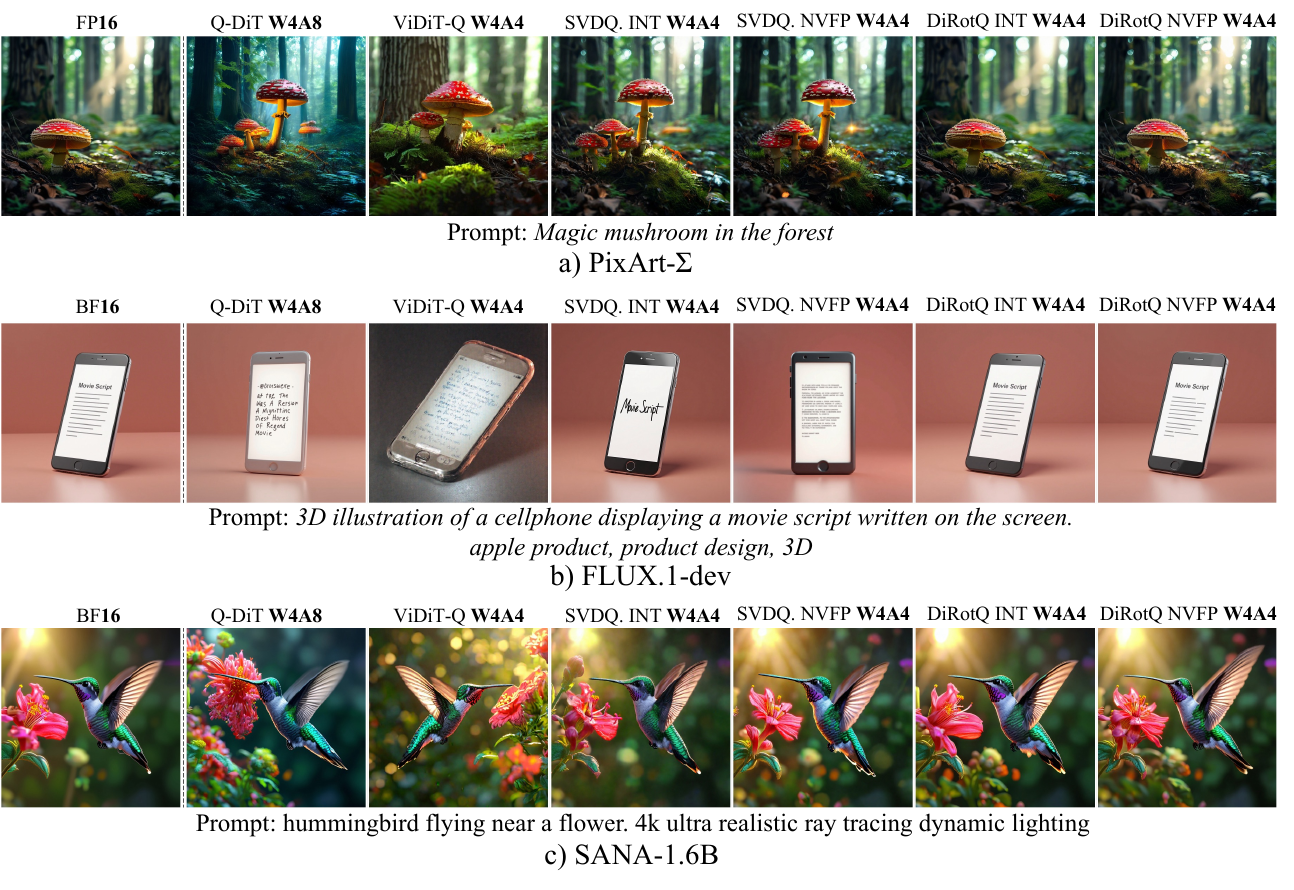}
    
    \vspace*{-3mm} \caption{Visual comparison on MJHQ-30K across three models. DiRotQ consistently outperforms prior techniques, surpasses the W4A8 method, and produces results closest to the $16$-bit baselines.}
    
    \vspace*{-3mm} \label{fig:visual_three_models_mjhq_main}
\end{figure*}

\subsection{Setup}
\label{sec:setup}

\paragraph{Models and Datasets}
We evaluate DiRotQ on three text-to-image DiT~\citep{peebles2023scalable} models: \texttt{PixArt-$\Sigma$}~\citep{chen2024pixart}, \texttt{FLUX.1-dev}~\citep{flux2024}, and \texttt{SANA}~\citep{xie2025sana}. \texttt{PixArt-$\Sigma$} is a 600M-parameter DiT optimized for high-resolution generation, while \texttt{FLUX.1-dev} is a large 12B-parameter model, and \texttt{SANA} is a 1.6B-parameter model designed for improved scalability. We report results in each model’s native precision (FP16 for PixArt-$\Sigma$, BF16 for FLUX.1-dev and SANA).
Following prior work~\citep{li2023q-diffusion, zhao2024mixdq, li2025svdquant, yang2025lrq}, we use 128 prompts from the \texttt{COCO Captions} dataset~\citep{chen2015microsoft} for calibration. We evaluate generalization on two diverse prompt sets: \texttt{MJHQ-30K}~\citep{li2024playground}, which captures diverse artistic styles, and \texttt{sDCI}~\citep{urbanek2024picture}, which focuses on realistic and detailed captions. For both datasets, we randomly sample 5K prompts for evaluation. See Appendix~\ref{app:model-generation-setting-dataset-details} for details on models and datasets.

\paragraph{Baselines}
We compare DiRotQ with the following PTQ techniques:
\begin{itemize}[nosep, wide=0pt, leftmargin=*]
    \item \texttt{PTQ4DiT}~\citep{wu2024ptq4dit} enables W4A8 quantization for DiTs by addressing salient channel outliers and temporal activation variation via channel-wise calibration. However, we do not apply it to FLUX.1-dev due to its 12B parameter scale, where layer-wise AdaRound~\citep{nagel2020up} calibration becomes prohibitively expensive (over 60 hours per run on an A100 GPU), making iterative tuning impractical.
    \item \texttt{Q-DiT}~\citep{chen2025q} addresses spatial and temporal variance in DiTs by combining automatic quantization granularity allocation across channels with sample-wise dynamic activation quantization across timesteps, improving robustness and accuracy under low-bit settings (e.g., W4A8).
    \item \texttt{ViDiT-Q}~\citep{zhao2025viditq} addresses activation outliers using per-token quantization and smoothing~\citep{xiao2023smoothquant}, combined with rotation-based transformations~\citep{ashkboos2024quarot, liu2025spinquant}, achieving lossless W8A8 quantization on PixArt-$\Sigma$ and strong performance at lower bit-widths (e.g., W4A8).
    \item \texttt{SVDQuant}~\citep{li2025svdquant} first shifts activation outliers to weights via smoothing~\citep{xiao2023smoothquant, lin2024awq}, then uses \mbox{SVD-based}~\citep{eckart1936approximation} low-rank decomposition to split tensors into a high-precision branch for outliers and a low-bit branch for residuals, enabling efficient W4A4 quantization of DiT models.
\end{itemize}

\paragraph{Metrics}
Following prior work~\citep{li2023q-diffusion, li2025svdquant, yang2025lrq}, we evaluate model performance using \textit{quality} and \textit{similarity} metrics. For quality, we report \mbox{\texttt{Fréchet Inception Distance}} (\texttt{FID}~\citep{heusel2017gans}, $\downarrow$), which measures distributional distance between generated and real images, and \texttt{Image Reward} \mbox{(\texttt{IR}~\citep{xu2023imagereward}, $\uparrow$)}, which reflects alignment with human preferences. For similarity, we use \texttt{Learned Perceptual Image Patch Similarity} \mbox{(\texttt{LPIPS}~\citep{zhang2018unreasonable}, $\downarrow$)} and \texttt{Peak Signal-to-Noise Ratio} (\texttt{PSNR}, $\uparrow$), measuring perceptual and pixel-level fidelity, respectively. Since LPIPS and PSNR are computed against 16-bit reference outputs, they are undefined for reference rows (marked as -- in tables). 

For holistic evaluation, we further employ a \mbox{\texttt{VLM-as-a-Judge protocol}} inspired by VIEScore~\citep{ku2024viescore}, using a Multimodal Large Language Model (MLLM)~\citep{yin2024survey} to assess Semantic Consistency (SC) and \mbox{Perceptual Quality (PQ)} on a \mbox{$1$\texttt{-}$10$ scale ($\uparrow$)}. More details on additional metrics, including \mbox{CLIP IQA~\citep{wang2023exploring}}, CLIP Score~\citep{hessel2021clipscore}, and SSIM~\citep{wang2004image}, are provided in Appendix~\ref{app:additiona-eval-metrics}.

\begin{table*}[t!]
\centering
\caption{Comparison of FID, Image Reward (IR), LPIPS, and PSNR on the MJHQ-30K and sDCI datasets across different models. All results are obtained using the official codebases of each method. All methods, except PTQ4DiT and ViDiT-Q, employ GPTQ~\cite{frantar2023gptq} for weight quantization. $\uparrow$ indicates higher is better, and $\downarrow$ indicates lower is better. DiRotQ substantially outperforms prior PTQ methods across different models and datasets, achieving the best quality–similarity trade-off at 4-bit precision.}
\label{tab:visual-quality-results-main}
\resizebox{0.99\textwidth}{!}{
\begin{tabular}{clccccccccc}
\toprule
 \multirow{3}{*}[-1.0ex]{\textbf{Model}} & \multirow{3}{*}[-1.0ex]{\textbf{Precision}} & \multirow{3}{*}[-1.0ex]{\textbf{Method}} & \multicolumn{4}{c}{\textbf{MJHQ-30K}} & \multicolumn{4}{c}{\textbf{sDCI}}  \\ \cmidrule(r){4-7}  \cmidrule(l){8-11} 
 & & & \multicolumn{2}{c}{\textbf{Quality}} &  \multicolumn{2}{c}{\textbf{Similarity}} & \multicolumn{2}{c}{\textbf{Quality}} &  \multicolumn{2}{c}{\textbf{Similarity}} \\ \cmidrule(r){4-5}  \cmidrule(lr){6-7}  \cmidrule(lr){8-9}  \cmidrule(l){10-11} 
  & & & \textbf{FID ($\downarrow$)} & \textbf{IR ($\uparrow$)} & \textbf{LPIPS ($\downarrow$)} & \textbf{PSNR ($\uparrow$)} & \textbf{FID ($\downarrow$)} & \textbf{IR ($\uparrow$)} & \textbf{LPIPS ($\downarrow$)} & \textbf{PSNR ($\uparrow$)} \\ \midrule \midrule
  & FP16     & - & 16.6 & 0.952 & - & - & 24.8 & 0.963 & - & - \\ \cmidrule{2-11}
  & PTQ4DiT & INT W4A8 & 39.8 & 0.572 & 0.531 & 12.8 & 47.1 & 0.560 & 0.649 & 11.0 \\
  & Q-DiT    & INT W4A8 & 19.6 & 0.874 & 0.500 & 13.8 & 26.1 & 0.945 & 0.518 & 13.0 \\
  & \cellcolor[HTML]{e3f4f7}DiRotQ & \cellcolor[HTML]{e3f4f7}INT W4A8 & \cellcolor[HTML]{e3f4f7}\textbf{16.5} & \cellcolor[HTML]{e3f4f7}\textbf{0.942} & \cellcolor[HTML]{e3f4f7}\textbf{0.222} & \cellcolor[HTML]{e3f4f7}\textbf{19.5} & \cellcolor[HTML]{e3f4f7}\textbf{24.5} & \cellcolor[HTML]{e3f4f7}\textbf{0.956} & \cellcolor[HTML]{e3f4f7}\textbf{0.236} & \cellcolor[HTML]{e3f4f7}\textbf{18.4} \\ \cmidrule{2-11}
  & ViDiT-Q  & INT W4A4 & 21.1 & 0.896 & 0.698 & 10.0 & 26.5 & 0.942 & 0.717 & 9.3 \\
  & SVDQuant & INT W4A4 & 18.9 & 0.871 & 0.319 & 17.6 & 25.8 & 0.922 & 0.352 & 16.4 \\
   & \cellcolor[HTML]{e3f4f7}DiRotQ & \cellcolor[HTML]{e3f4f7}INT W4A4 & \cellcolor[HTML]{e3f4f7}\textbf{15.9} & \cellcolor[HTML]{e3f4f7}\textbf{0.941} & \cellcolor[HTML]{e3f4f7}\textbf{0.236} & \cellcolor[HTML]{e3f4f7}\textbf{19.1} & \cellcolor[HTML]{e3f4f7}\textbf{23.1} & \cellcolor[HTML]{e3f4f7}\textbf{0.965} & \cellcolor[HTML]{e3f4f7}\textbf{0.251} & \cellcolor[HTML]{e3f4f7}\textbf{18.0} \\ \cmidrule{2-11}
  \multirow{-9}{*}{PixArt-$\Sigma$} & SVDQuant & NVFP W4A4 & 16.8 & 0.933 & 0.264 & 18.6 & 23.2 & \textbf{0.964} & 0.288 & 17.4 \\
  \multirow{-9}{*}{(20 Steps)}& \cellcolor[HTML]{e3f4f7}DiRotQ   & \cellcolor[HTML]{e3f4f7}NVFP W4A4 &  \cellcolor[HTML]{e3f4f7}\textbf{16.0} & \cellcolor[HTML]{e3f4f7}\textbf{0.946} & \cellcolor[HTML]{e3f4f7}\textbf{0.224} & \cellcolor[HTML]{e3f4f7}\textbf{19.6} & \cellcolor[HTML]{e3f4f7}\textbf{23.0} & \cellcolor[HTML]{e3f4f7}\textbf{0.964} & \cellcolor[HTML]{e3f4f7}\textbf{0.241} & \cellcolor[HTML]{e3f4f7}\textbf{18.4}  \\ \midrule \midrule

  & BF16     & - & 19.9 & 0.939 & - & - & 25.0 & 1.02 & - & - \\ \cmidrule{2-11}
  & Q-DiT    & INT W4A8 & 19.9 & 0.909 & 0.213 & 21.8 & 25.0 & 1.00 & 0.240 & 20.1 \\
  & \cellcolor[HTML]{e3f4f7}DiRotQ   & \cellcolor[HTML]{e3f4f7}INT W4A8 & \cellcolor[HTML]{e3f4f7}\textbf{19.8} & \cellcolor[HTML]{e3f4f7}\textbf{0.933} & \cellcolor[HTML]{e3f4f7}\textbf{0.132} & \cellcolor[HTML]{e3f4f7}\textbf{24.9} & \cellcolor[HTML]{e3f4f7}\textbf{24.8} & \cellcolor[HTML]{e3f4f7}\textbf{1.02} & \cellcolor[HTML]{e3f4f7}\textbf{0.146} & \cellcolor[HTML]{e3f4f7}\textbf{23.2} \\ \cmidrule{2-11}
  & ViDiT-Q  & INT W4A4 & 36.3 & 0.382 & 0.672 & 13.6 & 32.8 & 0.456 & 0.671 & 12.6 \\
  & SVDQuant & INT W4A4 & \textbf{19.7} & 0.920 & 0.241 & 21.0 & \textbf{24.7} & \textbf{1.01} & 0.273 & 19.2 \\
  & \cellcolor[HTML]{e3f4f7}DiRotQ & \cellcolor[HTML]{e3f4f7}INT W4A4 & \cellcolor[HTML]{e3f4f7}19.8 & \cellcolor[HTML]{e3f4f7}\textbf{0.934} & \cellcolor[HTML]{e3f4f7}\textbf{0.139} & \cellcolor[HTML]{e3f4f7}\textbf{24.6} & \cellcolor[HTML]{e3f4f7}24.9 & \cellcolor[HTML]{e3f4f7}\textbf{1.01} & \cellcolor[HTML]{e3f4f7}\textbf{0.156} & \cellcolor[HTML]{e3f4f7}\textbf{22.8} \\ \cmidrule{2-11}
  \multirow{-9}{*}{FLUX.1-dev} & SVDQuant & NVFP W4A4 & \textbf{19.9} & 0.930 & 0.180 & 22.9 & \textbf{24.8} & \textbf{1.01} & 0.201 & 21.2 \\
  \multirow{-9}{*}{(25 Steps)} & \cellcolor[HTML]{e3f4f7}DiRotQ  & \cellcolor[HTML]{e3f4f7}NVFP W4A4 & \cellcolor[HTML]{e3f4f7}\textbf{19.9} & \cellcolor[HTML]{e3f4f7}\textbf{0.932} & \cellcolor[HTML]{e3f4f7}\textbf{0.144} & \cellcolor[HTML]{e3f4f7}\textbf{24.4} & \cellcolor[HTML]{e3f4f7}25.0 & \cellcolor[HTML]{e3f4f7}\textbf{1.01} & \cellcolor[HTML]{e3f4f7}\textbf{0.159} & \cellcolor[HTML]{e3f4f7}\textbf{22.7} \\ \midrule \midrule

  & BF16     & - & 16.0 & 1.10 & - & - & 22.8 & 1.07 & - & - \\ \cmidrule{2-11}
  & PTQ4DiT  & INT W4A8 & 24.8 & 0.647 & 0.668 & 10.3 & 27.6 & 0.716 & 0.686 & 9.3 \\
  & Q-DiT    & INT W4A8 & 16.5 & \textbf{1.11} & 0.247 & 17.5 & \textbf{22.6} & \textbf{1.06} & 0.277 & 15.8  \\
  & \cellcolor[HTML]{e3f4f7}DiRotQ   & \cellcolor[HTML]{e3f4f7}INT W4A8 & \cellcolor[HTML]{e3f4f7}\textbf{16.0} & \cellcolor[HTML]{e3f4f7}1.10 & \cellcolor[HTML]{e3f4f7}\textbf{0.078} & \cellcolor[HTML]{e3f4f7}\textbf{24.1} & \cellcolor[HTML]{e3f4f7}22.9 & \cellcolor[HTML]{e3f4f7}\textbf{1.06} & \cellcolor[HTML]{e3f4f7}\textbf{0.097} & \cellcolor[HTML]{e3f4f7}\textbf{21.9} \\ \cmidrule{2-11}
  & ViDiT-Q  & INT W4A4 & 16.1 & 1.07 & 0.242 & 17.7 & 22.7 & 1.03 & 0.273 & 15.9 \\
  & SVDQuant & INT W4A4 & 16.1 & 1.08 & 0.222 & 18.1 & \textbf{21.5} & \textbf{1.06} & 0.235 & 16.8 \\
  & \cellcolor[HTML]{e3f4f7}DiRotQ & \cellcolor[HTML]{e3f4f7}INT W4A4 & \cellcolor[HTML]{e3f4f7}\textbf{16.0} & \cellcolor[HTML]{e3f4f7}\textbf{1.10} & \cellcolor[HTML]{e3f4f7}\textbf{0.086} & \cellcolor[HTML]{e3f4f7}\textbf{23.5} & \cellcolor[HTML]{e3f4f7}22.9 & \cellcolor[HTML]{e3f4f7}\textbf{1.06} & \cellcolor[HTML]{e3f4f7}\textbf{0.104} & \cellcolor[HTML]{e3f4f7}\textbf{21.4} \\ \cmidrule{2-11}
  \multirow{-9}{*}{SANA-1.6B} & SVDQuant & NVFP W4A4 & \textbf{15.9} & \textbf{1.09} & 0.162 & 19.8 & 22.8 & \textbf{1.06} & 0.181 & 18.1 \\
  \multirow{-9}{*}{(20 Steps)} & \cellcolor[HTML]{e3f4f7}DiRotQ   & \cellcolor[HTML]{e3f4f7}NVFP W4A4 & \cellcolor[HTML]{e3f4f7}16.0 & \cellcolor[HTML]{e3f4f7}\textbf{1.09} & \cellcolor[HTML]{e3f4f7}\textbf{0.092} & \cellcolor[HTML]{e3f4f7}\textbf{23.2} & \cellcolor[HTML]{e3f4f7}\textbf{22.4} & \cellcolor[HTML]{e3f4f7}\textbf{1.06} & \cellcolor[HTML]{e3f4f7}\textbf{0.108} & \cellcolor[HTML]{e3f4f7}\textbf{21.2} \\ \bottomrule
  \end{tabular}
} \vspace*{-2mm}
\end{table*}

\paragraph{Implementation details}
Following SVDQuant~\citep{li2025svdquant}, we adopt \textit{per-token dynamic} activation quantization and \textit{per-channel static} weight quantization for $8$-bit settings. For $4$-bit INT, we apply \textit{per-group} quantization to both activations and weights (group size $64$) with $16$-bit scaling factors, using asymmetric activation quantization. For $4$-bit FP, we use NVFP4~\cite{nvidia_blackwell_2024}, which supports group size $16$ with FP8 scaling. 
For VLM-as-a-Judge evaluation, since the original VIEScore backbone (GPT-4o~\citep{openai2024gpt4o}) requires a commercial API, we instead employ \texttt{Qwen2-VL-7B-Instruct}~\citep{wang2024qwen2} and \texttt{InternVL2-8B}~\citep{chen2024expanding} as open-source alternatives, approximating the VIEScore protocol by: (i) prompting each model to produce a step-by-step rationale \textit{prior} to assigning numeric scores~\citep{zheng2023judging}, (ii) using low-temperature sampling to reduce score variance, and (iii) computing the overall score as a geometric mean of SC and PQ.
Appendix~\ref{app:additional_implementation_details} provides full details of the quantization scheme and VLM-as-a-Judge evaluation protocol, including scoring scheme and judge prompts.

\subsection{Results}
\label{sec:results}
\paragraph{Visual quality results}
We present quantitative results in Table~\ref{tab:visual-quality-results-main} across multiple models and precision settings, along with visual qualitative comparisons in Figure~\ref{fig:visual_three_models_mjhq_main}. We exclude PTQ4DiT from the qualitative comparisons in the interest of space, as it consistently underperforms relative to the other methods. Figure~\ref{fig:visual_three_models_mjhq_main} illustrates visual comparisons across three models on the MJHQ-30K dataset. We observe that DiRotQ at $4$-bit precision consistently outperforms other methods, even surpassing the W4A8 technique, achieving closer similarity to the $16$-bit models. 

Table~\ref{tab:visual-quality-results-main} indicates that DiRotQ consistently outperforms prior PTQ methods across all models and datasets, with the most significant gains at W4A4 precision. On PixArt-$\Sigma$, DiRotQ INT W4A4 achieves FID $15.9$ and IR $0.941$ on MJHQ-30K, substantially improving over SVDQuant (FID $18.9$, IR $0.871$), while also delivering higher fidelity (PSNR $19.1$ vs. $17.6$) and lower perceptual error (LPIPS $0.236$ vs. $0.319$). These improvements extend to sDCI and NVFP4 formats, where DiRotQ consistently reduces LPIPS and improves PSNR, indicating better preservation of visual details. For larger models such as FLUX.1-dev and SANA-1.6B, DiRotQ maintains comparable or better FID while significantly improving similarity metrics, highlighting stronger reconstruction quality under aggressive quantization. Notably, DiRotQ at W4A4 substantially closes the gap to BF16 (e.g., FLUX.1-dev on MJHQ-30K, IR 0.934 vs. 0.939), demonstrating that PCA-based rotation effectively mitigates activation quantization error and enables high-quality 4-bit inference across diverse DiT architectures. Additional quantitative metrics, including CLIP Score, CLIP IQA, and SSIM, and visual qualitative comparisons are provided in Appendix~\ref{app:visual_quality_results_appendix} (Table~\ref{tab:visual-quality-results-appendix}, Figures~\ref{fig:pixart_visual},~\ref{fig:flux_visual}, and~\ref{fig:sana_visual}).

\paragraph{Memory saving and speedup}
\label{sec:memory_speedup}
Figure~\ref{fig:memory_speedup} shows the memory savings and inference speedup of DiRotQ on FLUX.1-dev. Following SVDQuant~\citep{li2025svdquant}, we evaluate the memory savings and end-to-end speedup of DiRotQ on an NVIDIA RTX 4090 desktop GPU~\citep{nvidia_rtx4090}. For efficient $4$-bit INT matrix multiplication, we use the optimized kernel provided by the \texttt{Nunchaku} inference engine~\citep{li2025svdquant}. For the rotation operations, we implement an efficient Triton~\citep{tillet2019triton} kernel. 
Compared to the BF16 baseline, DiRotQ achieves an overall \textit{$2.1\times$} reduction in DiT inference memory, after accounting for overhead from high-precision branches, rotation matrices, and unquantized normalization layers. The memory savings mainly come from quantizing the weights of the linear layers, as activation memory accounts for only about $2\%$ of the weight memory. As a result, DiRotQ has nearly the same memory footprint as the INT4 weight-only quantization variant (Figure~\ref{fig:memory_speedup}a).
For latency, when the batch size is $1$ and the BF16 model fits on GPU, DiRotQ achieves a \textit{$2.3\times$} end-to-end speedup per denoising timestep compared to BF16. In contrast, the INT4 weight-only baseline provides little to no speedup, since it does not enable low-precision compute and introduces additional overhead from upcasting weights to BF16 (Figure~\ref{fig:memory_speedup}b). When increasing the batch size to $2$, the BF16 model no longer fits entirely in GPU memory and requires per-layer CPU offloading, while DiRotQ still fits entirely in GPU memory. This avoids offloading overhead and results in a \textit{$9.2\times$} speedup over BF16 (Figure~\ref{fig:memory_speedup}d).

\begin{figure*}[t!]
    \centering
    \includegraphics[scale=0.36]{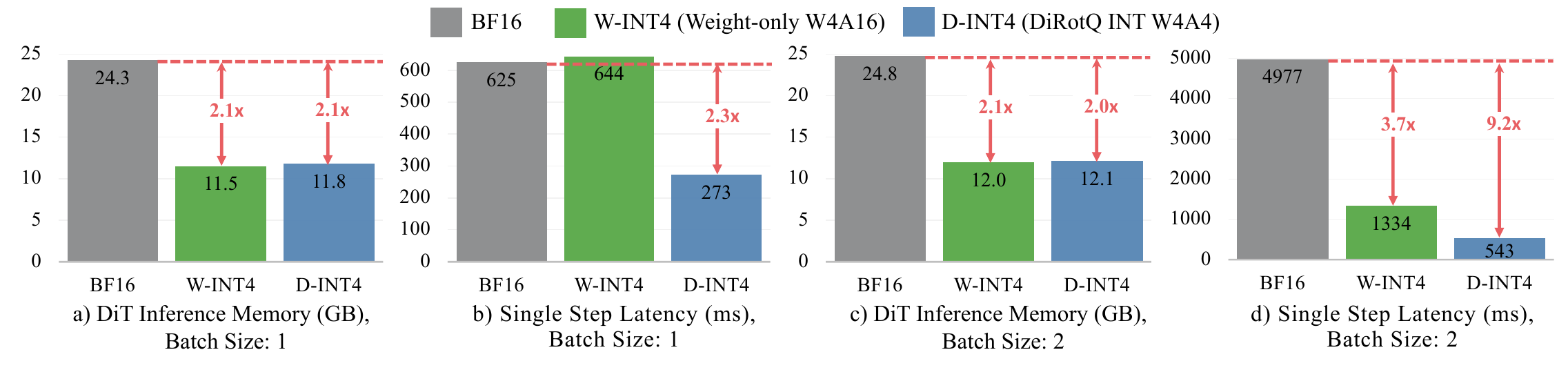}
    \vspace{-2mm}
     \caption{Memory usage and one-step latency for FLUX.1-dev across batch sizes 1 and 2. DiRotQ (INT W4A4) reduces memory by $2.1\times$ and achieves $2.3\times$ speedup at batch size 1 and $9.2\times$ at batch size 2 over BF16; the larger gain at batch size 2 is driven by eliminating CPU offloading.}
     \label{fig:memory_speedup}
     \vspace{-2mm}
\end{figure*}

\paragraph{VLM-as-a-Judge}
Figure~\ref{fig:vlm_judge} shows pairwise VLM-as-a-Judge win/tie/loss rates between DiRotQ and SVDQuant on PixArt-$\Sigma$ generations from the MJHQ-30K prompt set, evaluated by InternVL2-8B and Qwen2-VL-7B-Instruct across Semantic Consistency (SC), Perceptual Quality (PQ), and Overall score. Differences below $0.01$ are treated as ties. Under INT4 quantization (Figure~\ref{fig:vlm_judge_int4}), DiRotQ is consistently preferred across all metrics by both judges. The advantage is most pronounced on the Overall score, where InternVL2-8B favors DiRotQ on $48.1\%$ of prompts compared to $39.9\%$ for SVDQuant, with a similar trend observed for Qwen2-VL ($23.3\%$ vs. $16.2\%$), despite its higher tie rate. Under NVFP4 (Figure~\ref{fig:vlm_judge_nvfp4}), the gap narrows as both methods approach full-precision quality and become less distinguishable. Nevertheless, DiRotQ maintains a substantial edge across most settings, outperforming SVDQuant in 5 out of 6 judge–metric combinations. These results indicate that DiRotQ provides more reliable perceptual quality and alignment under low-bit quantization. Per-category breakdowns are reported in Appendix~\ref{app:vlm_as_judge_per_category}.

\begin{figure}[t!]
    \centering
    \begin{subfigure}{\textwidth}
        \centering
        \includegraphics[scale=0.508]{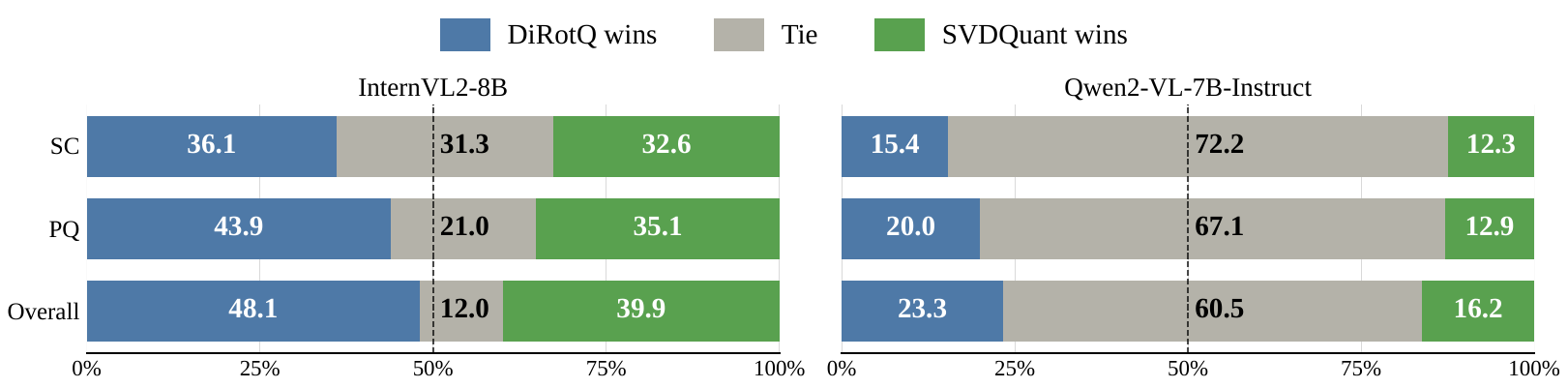}
        
        \vspace*{-1mm} \caption{INT4 quantization.}
        \label{fig:vlm_judge_int4}
    \end{subfigure}
    
    \begin{subfigure}{\textwidth}
        \centering
        \includegraphics[scale=0.508]{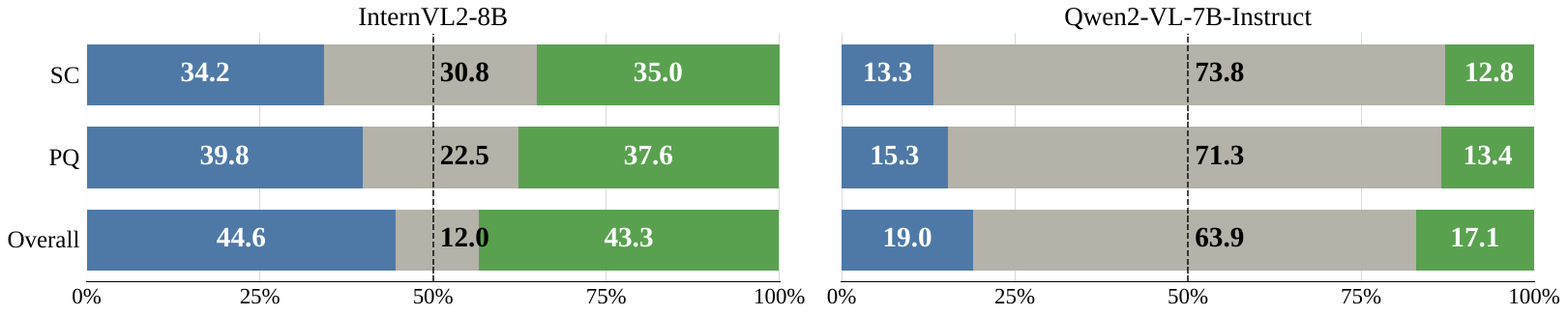}
        
        \vspace*{-1mm} \caption{NVFP4 quantization.}
        \label{fig:vlm_judge_nvfp4}
    \end{subfigure}
     
    \caption{VLM-as-a-Judge pairwise comparison between SVDQuant and DiRotQ on PixArt-$\Sigma$ generations from MJHQ-30K prompt set, judged by InternVL2-8B (left) and Qwen2-VL-7B-Instruct (right) along Semantic Consistency (SC), Perceptual Quality (PQ), and the geometric-mean Overall score. (a) INT4 quantization. (b) NVFP4 quantization. Dashed line indicates the $50\%$ parity point; differences below $0.01$ are treated as ties.}
    \vspace*{-4mm}
    \label{fig:vlm_judge}
\end{figure}

\subsection{Ablation study}

\begin{table}[t]
\centering
\caption{Quantitative comparison of DiRotQ settings on PixArt-$\Sigma$ over MJHQ-30K. PCA-based rotation isolates outliers into a high-precision subspace, significantly improving quality over vanilla RTN $4$-bit quantization. The residual orthogonal rotation further equalizes variance in the low-precision subspace, and GPTQ on the weight residual brings DiRotQ closer to FP16 quality.}
\label{tab:ablation}
\resizebox{0.72\textwidth}{!}{
\begin{tabular}{lccc|cccc}
\toprule
\textbf{Precision} & \textbf{PCA} & \textbf{Rotation} & \textbf{GPTQ} & \textbf{FID} ($\downarrow$) & \textbf{IR} ($\uparrow$) & \textbf{LPIPS} ($\downarrow$) & \textbf{PSNR} ($\uparrow$) \\
\midrule
FP16 & -- & -- & -- & 16.6 & 0.952 & -- & -- \\
\midrule
\multirow{4}{*}{INT W4A4}
 & \xmark & \xmark & \xmark & 40.3 & 0.512 & 0.603 & 13.0 \\
 & \cmark & \xmark & \xmark & 19.7 & 0.892 & 0.452 & 14.6 \\
 & \cmark & \cmark & \xmark & 19.2 & 0.904 & 0.438 & 14.8 \\
 & \cellcolor[HTML]{e3f4f7}\cmark & \cellcolor[HTML]{e3f4f7}\cmark & \cellcolor[HTML]{e3f4f7}\cmark & \cellcolor[HTML]{e3f4f7}\textbf{15.9} & \cellcolor[HTML]{e3f4f7}\textbf{0.941} & \cellcolor[HTML]{e3f4f7}\textbf{0.236} & \cellcolor[HTML]{e3f4f7}\textbf{19.1} \\
\midrule
\multirow{4}{*}{NVFP W4A4}
 & \xmark & \xmark & \xmark & 27.7 & 0.710 & 0.515 & 14.7 \\
 & \cmark & \xmark & \xmark & 16.7 & 0.899 & 0.335 & 17.1 \\
 & \cmark & \cmark & \xmark & 16.7 & 0.912 & 0.324 & 17.3 \\
 & \cellcolor[HTML]{e3f4f7}\cmark & \cellcolor[HTML]{e3f4f7}\cmark & \cellcolor[HTML]{e3f4f7}\cmark & \cellcolor[HTML]{e3f4f7}\textbf{16.0} & \cellcolor[HTML]{e3f4f7}\textbf{0.946} & \cellcolor[HTML]{e3f4f7}\textbf{0.224} & \cellcolor[HTML]{e3f4f7}\textbf{19.6} \\
\bottomrule
\end{tabular}
} 
\end{table}

\paragraph{Component ablation.} Table~\ref{tab:ablation} shows the contribution of each DiRotQ component on PixArt-$\Sigma$ over MJHQ-30K. Naive INT W4A4 RTN quantization degrades FID severely from $16.6$ (FP16) to $40.3$, confirming that $4$-bit quantization is highly error-prone without outlier mitigation. Adding PCA-based rotation reduces FID to $19.7$ by isolating outlier energy into a high-precision subspace, while the residual orthogonal rotation provides further consistent gains across all metrics. GPTQ on the weight residual has a synergistic effect with the rotations, bringing DiRotQ to FID $15.9$ and PSNR $19.1$ and matching FP16 quality. The same trend holds for NVFP4. 
\paragraph{High-precision tail fraction.} For PixArt-$\Sigma$ on MJHQ-30K, we sweep the FP16 tail fraction $r$ from $5\%$ to $25\%$ for different layers, and across various timesteps, and find that activation QSNR improves sharply up to $r{=}10\%$ and saturates beyond, with gains under $0.5$ dB per additional $5\%$. This reflects PCA's ability to concentrate outlier variance into a few leading components. We therefore use $r{=}10\%$ throughout the paper, balancing quality and overhead (Section~\ref{sec:memory_speedup}); full results are in Appendix~\ref{app:hp_sweep}.

\section{Limitations and broader impacts}
\label{sec:Limitations_and_broader_impacts}
DiRotQ improves the efficiency of DiTs by reducing memory and compute costs via 4-bit PTQ, enabling efficient deployment of large-scale generative models in resource-constrained settings. While we focus on text-to-image generation, extending DiRotQ to video remains an important direction, though temporal dependencies across frames introduce additional challenges not addressed here. Qur PCA basis is also shared across timesteps to enable offline weight fusion; exploring per-timestep-group rotations under tighter memory budgets is left to future work.
More broadly, although our method supports wider adoption of generative models, it does not mitigate risks of misuse such as generating harmful or misleading content. Addressing these concerns requires complementary safeguards, responsible deployment, and further research into trustworthy generative systems.
\section{Conclusion}
\label{sec:conclusion}
In this work, we introduce DiRotQ, a rotation-aware 4-bit PTQ framework specifically designed for diffusion transformers. By leveraging PCA-based transformations to better align activation statistics, DiRotQ effectively mitigates quantization errors arising from timestep-dependent variation and activation outliers. To provide a comprehensive assessment of generative performance, we further integrate a VLM-as-a-Judge evaluation protocol, offering a holistic view of perceptual quality and prompt alignment. Our implementation includes an efficient Triton-based rotation kernel within the inference pipeline, significantly reducing memory footprint while improving runtime performance. Extensive experiments demonstrate that DiRotQ consistently preserves high generation quality across diverse benchmarks. Notably, under the stringent W4A4 setting, DiRotQ achieves a PSNR of 24.6 on the MJHQ-30K dataset using the FLUX.1-dev architecture. Furthermore, we demonstrate the practical scalability of our approach by reducing the memory usage of the 12B FLUX.1-dev model by $2.1\times$ and achieving a $2.3\times$ speedup over the BF16 baseline on a consumer-grade RTX 4090 GPU. These results highlight DiRotQ’s ability to narrow the gap to full-precision performance while enabling efficient deployment of large-scale diffusion models on resource-constrained hardware.



\bibliographystyle{plain}
\bibliography{refs} 

\newpage
\appendix
\section{Activation QSNR analysis across transformer layers}
\label{sec:act-qsnr-appendix}
\begin{figure}[ht!]
    \centering
    \begin{subfigure}{\textwidth}
        \centering
        \includegraphics[scale=0.64]{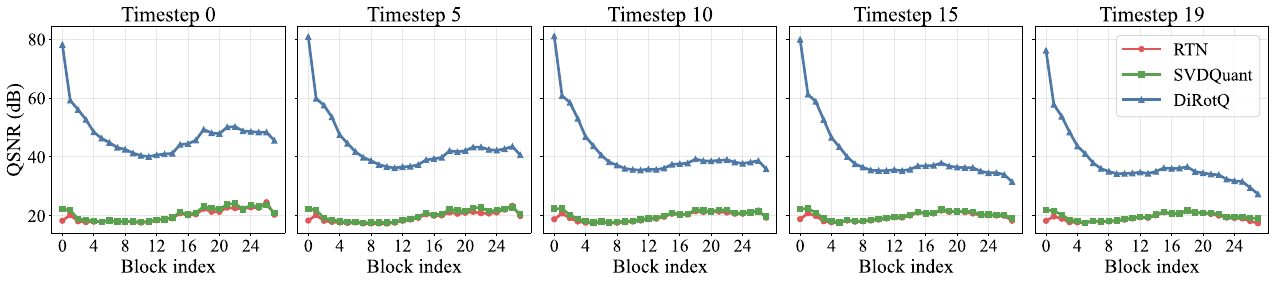}
        
        \vspace*{-1mm}\caption{Cross-attention query projection}
    \end{subfigure}
    \begin{subfigure}{\textwidth}
        \centering
        \includegraphics[scale=0.64]{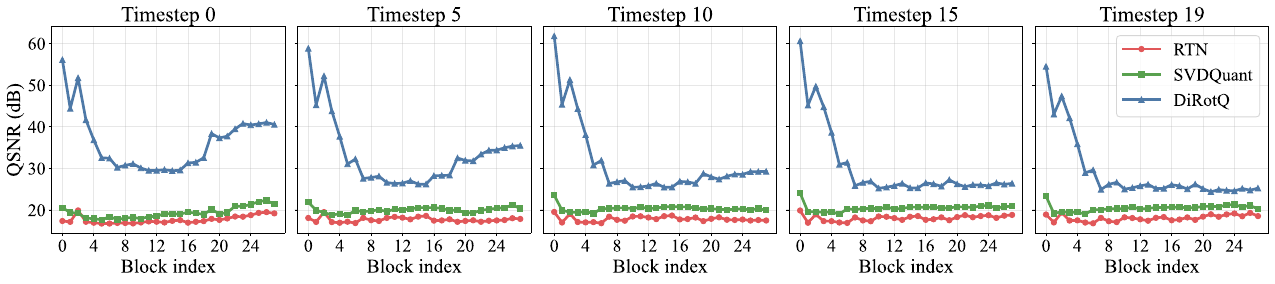}
        
        \vspace*{-1mm}\caption{Self-attention Q/K/V projections}
    \end{subfigure}
    \begin{subfigure}{\textwidth}
        \centering
        \includegraphics[scale=0.64]{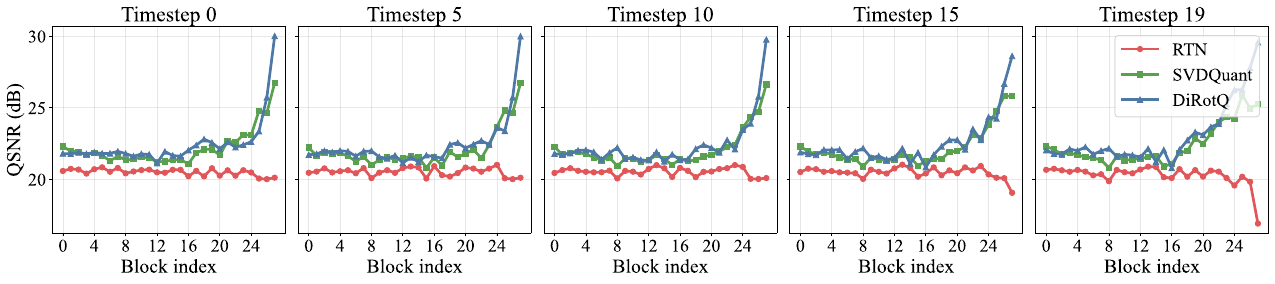}
        
        \vspace*{-1mm}\caption{Cross-attention output projection}
    \end{subfigure}
    \begin{subfigure}{\textwidth}
        \centering
        \includegraphics[scale=0.64]{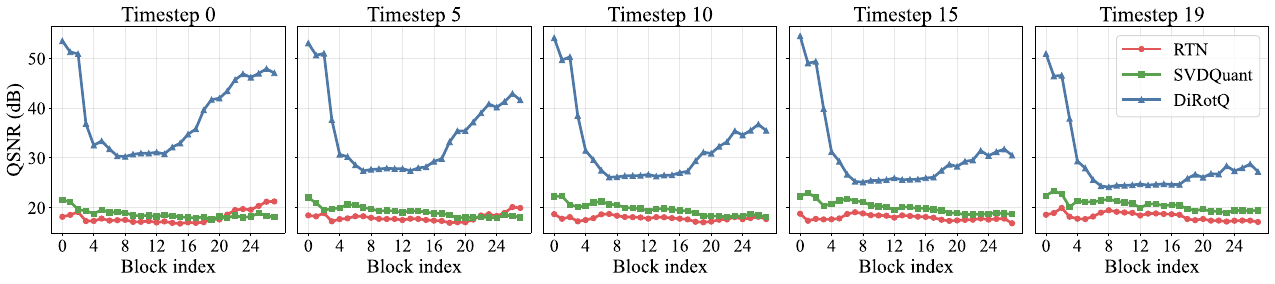}
        
        \vspace*{-1mm}\caption{FFN up projection}
    \end{subfigure}
    \begin{subfigure}{\textwidth}
        \centering
        \includegraphics[scale=0.64]{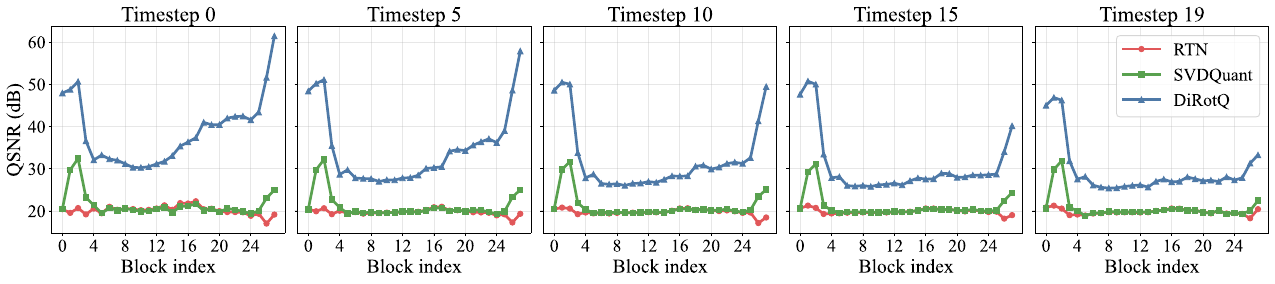}
        
        \vspace*{-1mm}\caption{FFN down projection}
    \end{subfigure}
    
    \vspace*{-1mm}\caption{Per-block activation QSNR (dB) for RTN, SVDQuant, and DiRotQ across $28$ transformer blocks of PixArt-$\Sigma$, shown for five denoising timesteps.}
    \label{fig:qsnr-appendix}
    \vspace{-1mm}
\end{figure}

Figure~\ref{fig:qsnr-appendix} shows per-block activation QSNR across all $28$ transformer blocks of PixArt-$\Sigma$ for the following layers: \textbf{a)} \textit{cross-attention query projection}, \textbf{b)} \textit{self-attention Q/K/V projections}, \textbf{c)} \textit{cross-attention output projection}, and \textbf{d-e)} \textit{FFN up/down projections}. DiRotQ outperforms RTN and SVDQuant across all layer types and timesteps, with the largest gains in the cross-attention query projection, exceeding $20$ dB in early blocks. The sole exception is the cross-attention output projection, where DiRotQ and SVDQuant perform comparably. SVDQuant offers a modest but consistent improvement over RTN, yet falls substantially short of DiRotQ across all other layer types.


\section{PCA-Based activation quantization and quantization error analysis}
\label{sec:pca_details}
Using a calibration set of $128$ samples from \texttt{COCO} \texttt{Captions}~\citep{chen2015microsoft} and running PixArt-$\Sigma$~\citep{chen2024pixart} for $20$ denoising timesteps per sample, we estimate the empirical second-moment matrix of the activations $\mathbf{X}$ and perform its eigendecomposition:
\begin{equation}
    \mathbf{\Sigma} = \frac{1}{N} \mathbf{X}^\top \mathbf{X} = \mathbf{U} \mathbf{\Lambda} \mathbf{U}^\top, \quad \mathbf{\Lambda} = \text{diag}(\lambda_1, \dots, \lambda_m), \quad \lambda_1 \geq \dots \geq \lambda_m
\end{equation}
Here $\boldsymbol{\Sigma}$ is the uncentered second-moment matrix, which equals the true covariance under zero-mean activations. This approximately holds for post-LayerNorm activations in DiTs.

Rotating activations into the PCA basis, $\hat{\mathbf{X}} = \mathbf{X} \mathbf{U}$, yields decorrelated channels with variance $\text{Var}(\hat{\mathbf{X}}_j) = \lambda_j$, effectively ordering components by importance. Then, we partition the rotated channels into a high-precision subspace (first $k$ channels, capturing the largest eigenvalues) and a low-precision subspace (remaining $m - k$ channels): 
\begin{equation}
\hat{\mathbf{X}} = [\underbrace{\hat{\mathbf{X}}_{1:k}}_{\text{high-prec } (b_h \text{-bit})}  \; \underbrace{\hat{\mathbf{X}}_{k+1:m}}_{\text{low-prec } (b_l \text{-bit})}] 
\end{equation}

The expected squared quantization error per token decomposes per channel as:
\begin{equation}
\frac{1}{N}\,\mathbb{E}\left[\|\hat{\mathbf{X}} - Q(\hat{\mathbf{X}})\|_F^2\right] = \sum_{j=1}^{k} \epsilon_j(b_h) + \sum_{j=k+1}^{m} \epsilon_j(b_l),
\end{equation}
where $\epsilon_j(b) = \frac{1}{N}\sum_{i=1}^{N} \mathbb{E}\big[(\hat{X}_{ij} - Q(\hat{X}_{ij}))^2\big]$ is the expected squared quantization error per token for the $j^{th}$ channel under $b$-bit uniform quantization, and $N$ is the number of calibration tokens. For a channel with variance $\lambda_j$, this error is proportional to $\lambda_j/4^b$~\citep{gersho2012vector}, reflecting that higher-variance channels incur more absolute error, and increasing the bit width $b$ exponentially reduces the error. In practice, for $k$ high variance channels we set $b_h = 16$ and for the remaining channels we use $b_l = 4$.


\section{Proof of orthonormality of \texorpdfstring{$\mathbf{U}_l \mathbf{R}$}{UlR}}
\label{app:proof}

\begin{lemma}
Let $\mathbf{U}_l \in \mathbb{R}^{m \times (m-k)}$ have orthonormal columns and $\mathbf{R} \in \mathbb{R}^{(m-k)\times(m-k)}$ be an orthogonal matrix. Then the columns of the product $\mathbf{U}_l \mathbf{R}$ are orthonormal.
\end{lemma}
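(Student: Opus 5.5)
The plan is to reduce the claim to a single Gram-matrix identity. The columns of a matrix $\mathbf{M}\in\mathbb{R}^{m\times(m-k)}$ are orthonormal exactly when $\mathbf{M}^\top\mathbf{M}=\mathbf{I}_{m-k}$. So I will compute $(\mathbf{U}_l\mathbf{R})^\top(\mathbf{U}_l\mathbf{R})$ directly and show that it equals the identity.

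The steps, in order, are as follows.
\begin{enumerate}
\item Use the transpose rule $(\mathbf{U}_l\mathbf{R})^\top=\mathbf{R}^\top\mathbf{U}_l^\top$.
\item Use associativity to write the product as $\mathbf{R}^\top(\mathbf{U}_l^\top\mathbf{U}_l)\mathbf{R}$.
\item Invoke the hypothesis that $\mathbf{U}_l$ has orthonormal columns, i.e.\ $\mathbf{U}_l^\top\mathbf{U}_l=\mathbf{I}_{m-k}$. The expression then collapses to $\mathbf{R}^\top\mathbf{R}$.
\item Invoke the orthogonality of $\mathbf{R}$, i.e.\ $\mathbf{R}^\top\mathbf{R}=\mathbf{I}_{m-k}$, to conclude.
\end{enumerate}
Reading the identity entrywise gives $\langle(\mathbf{U}_l\mathbf{R})_{:,i},(\mathbf{U}_l\mathbf{R})_{:,j}\rangle=\delta_{ij}$, which is precisely the orthonormality claimed in the lemma.

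The main text also uses a stronger fact: $\mathbf{U}_l\mathbf{R}$ spans the same residual subspace as $\mathbf{U}_l$. I would add one line for this. The column space of $\mathbf{U}_l\mathbf{R}$ is contained in that of $\mathbf{U}_l$. Both have dimension $m-k$, since orthonormal columns are linearly independent, so the two subspaces coincide.

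Finally, I will note a consequence for $\mathbf{V}=[\mathbf{U}_h\;\;\mathbf{U}_l\mathbf{R}]$. Because $\mathbf{U}_h^\top\mathbf{U}_l\mathbf{R}=\mathbf{0}\cdot\mathbf{R}=\mathbf{0}$, the matrix $\mathbf{V}$ is orthogonal, so $\mathbf{V}\mathbf{V}^\top=\mathbf{I}$ and the factorization used in Eq.~\ref{eq:weight_fusion} is justified.

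There is no genuine obstacle here. The only point requiring care is dimensional bookkeeping: $\mathbf{U}_l$ is not square, so $\mathbf{U}_l\mathbf{U}_l^\top\neq\mathbf{I}$. The argument must therefore use only $\mathbf{U}_l^\top\mathbf{U}_l=\mathbf{I}_{m-k}$, which the grouping in the second step ensures.
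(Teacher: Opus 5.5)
Your proof is correct and matches the paper's argument exactly: expand $(\mathbf{U}_l\mathbf{R})^\top(\mathbf{U}_l\mathbf{R})=\mathbf{R}^\top\mathbf{U}_l^\top\mathbf{U}_l\mathbf{R}$, use $\mathbf{U}_l^\top\mathbf{U}_l=\mathbf{I}$, then use $\mathbf{R}^\top\mathbf{R}=\mathbf{I}$. Your extra remarks go slightly beyond what the paper proves, and both are correct: the column spans coincide, and $\mathbf{V}=[\mathbf{U}_h\;\;\mathbf{U}_l\mathbf{R}]$ is orthogonal (which uses $\mathbf{U}_h^\top\mathbf{U}_l=\mathbf{0}$, true because $\mathbf{U}$ is orthogonal), and this is what justifies the identity $\mathbf{X}\mathbf{V}\mathbf{V}^\top\mathbf{W}=\mathbf{X}\mathbf{W}$ used in the method section.
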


\begin{proof}
By definition, $\mathbf{U}_l$ has orthonormal columns:
\[
\mathbf{U}_l^\top \mathbf{U}_l = \mathbf{I}.
\]
Since $\mathbf{R}$ is orthogonal, we have:
\[
\mathbf{R}^\top \mathbf{R} = \mathbf{I}.
\]

Consider the product $\mathbf{U}_l \mathbf{R}$. Its column orthonormality is verified as:
\[
(\mathbf{U}_l \mathbf{R})^\top (\mathbf{U}_l \mathbf{R}) 
= \mathbf{R}^\top \mathbf{U}_l^\top \mathbf{U}_l \mathbf{R} 
= \mathbf{R}^\top \mathbf{I} \mathbf{R} 
= \mathbf{R}^\top \mathbf{R} 
= \mathbf{I}.
\]

Hence, the columns of $\mathbf{U}_l \mathbf{R}$ are orthonormal.
\end{proof}

\section{Additional details on models, image generation settings, and datasets}
\label{app:model-generation-setting-dataset-details}
\paragraph{Models}
\texttt{PixArt-$\Sigma$}~\citep{chen2024pixart} is a compact DiT (600M parameters) optimized for high-resolution (up to 4K) generation. It employs stacked transformer blocks with self-attention, cross-attention, and feed-forward layers, along with a key\texttt{-}value token compression mechanism to improve efficiency. 
\texttt{FLUX.1-dev}~\citep{flux2024} is a guidance-distilled variant comprising 19 joint attention blocks~\citep{esser2024scaling} and 38 parallel attention blocks~\citep{dehghani2023scaling}, totaling 12B parameters. 
\texttt{SANA}~\citep{xie2025sana} is a 1.6B-parameter DiT that reduces token complexity via a $32\times$ compressed latent space and replaces softmax attention with linear attention for improved scalability. Licenses for all models are listed in Appendix~\ref{app:licenses}.

\paragraph{Image generation settings}
For all models, we generate images at a resolution of $1024\times 1024$ and use the \texttt{hash\_str\_to\_int} utility from Lhotse~\citep{lhotse_mixed_cut_docs} to derive deterministic per-prompt seeds for reproducibility. We employ the following model-specific generation settings:
\begin{itemize}[nosep, wide=0pt, leftmargin=*]
    \item \texttt{PixArt-$\Sigma$~\citep{chen2024pixart}}. We apply Classifier-Free Guidance (CFG)~\citep{ho2022classifier} with a guidance scale of $4.5$, balancing image fidelity and text alignment by strengthening the conditional signal relative to the unconditional branch~\citep{ho2022classifier}, and run $20$ inference steps.
    \item \texttt{FLUX.1-dev~\citep{flux2024}}. We use a CFG scale of $3.5$ and perform all generations with $25$ sampling steps using the \texttt{FluxPipeline} from the Hugging Face Diffusers library~\citep{von-platen-etal-2022-diffusers}. We evaluate on the \texttt{dev} variant, which employs single-branch guidance (i.e., not dual-branch CFG), with a maximum sequence length of $512$.
    \item \texttt{SANA~\citep{xie2025sana}}. We use $20$ sampling steps with a CFG scale of $4.5$. Sampling is performed using the Flow-DPM scheduler~\citep{lu2022dpm}, which enables efficient and stable diffusion sampling.
\end{itemize}

\paragraph{Datasets}
MJHQ-30K~\citep{li2024playground} contains 30K prompts sourced from Midjourney~\citep{midjourney_ai}, spanning 10 categories (3K samples each) with diverse artistic styles. 
DCI~\citep{urbanek2024picture} contains approximately 8K images with densely annotated captions, often spanning thousands of words; we use its summarized variant (sDCI), where captions are reduced to 77 tokens using large language models to align with diffusion model input constraints. 
The corresponding licenses for all datasets are provided in Appendix~\ref{app:licenses}.

\section{Additional evaluation metrics}
\label{app:additiona-eval-metrics}
We provide three additional evaluation metrics to further assess semantic alignment and perceptual image quality:
\begin{itemize}[nosep, wide=0pt, leftmargin=*]
    \item \texttt{CLIP Score~\citep{hessel2021clipscore}}. Measures semantic alignment between a generated image and its text prompt using cosine similarity in the \texttt{CLIP}~\citep{radford2021learning} embedding space. Higher scores indicate better text-image alignment.
    \item \texttt{CLIP IQA~\citep{wang2023exploring}}. Evaluates perceptual image quality using \texttt{CLIP}~\citep{radford2021learning} by comparing images against quality-related textual prompts. It captures artifacts such as blur and noise, with higher scores indicating better visual quality.
    \item \texttt{SSIM~\citep{wang2004image}}. Measures structural similarity between images by comparing luminance, contrast, and structure. Higher values indicate greater similarity.
\end{itemize}

\section{Additional implementation details}
\label{app:additional_implementation_details}

\subsection{Quantization details}
\label{app:addtional_quantization_details}
Following SVDQuant~\citep{li2025svdquant}, we quantize both weights and activations of all linear layers to $4$-bit precision, except for the key and value projections in cross-attention, as they account for less than $5\%$ of total runtime. Additionally, input activations to adaptive normalization layers in the FLUX.1-dev model, and the down-projection layers in PixArt-$\Sigma$, are kept in higher precision (W4A16).

Across all layers, we identify the top $8$\texttt{–}$12\%$ high-variance activation components via PCA and retain them in high precision, while quantizing the remaining $88$\texttt{–}$92\%$ as residuals. 
The residual weights are quantized using GPTQ~\citep{frantar2023gptq}, for which we use a \texttt{damping} \texttt{factor} of \texttt{$\lambda = 0.01$}, following the original work, and a block size of $128$; the former stabilizes the Hessian inversion, while the latter controls the granularity of column-wise quantization, balancing accuracy and efficiency. To compute the PCA projection matrices and perform GPTQ calibration, we use $128$ randomly selected samples from COCO Captions~\citep{chen2015microsoft}. Similarly, we have used a \texttt{damping} \texttt{factor} of \texttt{$\lambda = 0.01$} for stability of PCA computation. The entire pipeline, including PCA and quantization, is executed on a single NVIDIA A100 GPU.

\subsection{VLM-as-a-Judge evaluation details}
\label{app:vlm-judge}

Standard metrics (e.g., FID, LPIPS, CLIP Score) capture only isolated aspects of image quality and fail to jointly reflect visual coherence and prompt fidelity, particularly under quantization where degradations may appear as subtle artifacts or semantic drift. To address this, VLM-as-a-Judge evaluation has gained traction as a more holistic alternative~\citep{chen2024mj, hayes2025finegrain}, with recent work showing that multimodal language models provide reliable, fine-grained assessments aligned with human judgments~\citep{chen2024mllm, lee2024prometheus}.

In this work, we adopt a VLM-as-a-Judge evaluation protocol inspired by VIEScore~\citep{ku2024viescore}, which uses MLLMs as instruction-guided evaluators for conditional image generation. VIEScore reports that GPT-4o~\citep{openai2024gpt4o} judgments achieve a Spearman correlation of $\rho=0.4$ with human judgments, approaching human-to-human agreement of $\rho=0.45$~\citep{zar1972significance}. It evaluates images along two complementary axes: Semantic Consistency (SC), measuring prompt faithfulness, and Perceptual Quality (PQ), assessing visual coherence, sharpness, and artifacts, with the overall score computed as their geometric mean. Since GPT-4o requires a commercial API, we follow this two-axis structure using two open-source models, Qwen2-VL-7B-Instruct~\citep{wang2024qwen2} and InternVL2-8B~\citep{chen2024expanding} as judges, and report results from both models. Qwen2-VL provides holistic image-level assessment, while InternVL2’s dynamic tiling enables fine-grained local inspection, offering complementary coverage of global and local generation quality.

\paragraph{Evaluation setup}
For each method (DiRotQ and SVDQuant) and quantization format (INT4 and NVFP4), we evaluate 5K images generated by PixArt-$\Sigma$ on MJHQ-30K, spanning all 10 prompt categories. Each image is scored three times with low-temperature sampling (\texttt{do\_sample=True}, \texttt{temperature=0.3}), and the numeric scores are averaged across runs to reduce stochastic variance. Each image is scored via an independent API call using the pointwise prompt in Paragraph~\ref{app:judge_prompt}. As the judge evaluates one image at a time and never sees outputs from other methods in the same context, position bias across methods does not arise. Following VIEScore, the judge produces a step-by-step rationale \textit{before} assigning scores~\citep{zheng2023judging}, improving alignment with human judgments. The enforced overall score is computed as $\text{Overall} {=} \sqrt{\text{SC} \times \text{PQ}}$, ensuring that failure on either dimension cannot be masked by the other. Differences below $0.01$ are treated as ties.

\paragraph{Judge prompt}
\label{app:judge_prompt}
The judge receives the text prompt and the generated image. The prompt instructs the model to return a JSON object with the \texttt{rationale} key \textit{first} (think step-by-step before scoring), followed by \texttt{Semantic\_Consistency}, \texttt{Perceptual\_Quality}, and \texttt{Overall} on a 1--10 scale, with rubric anchors at every even level. The overall score is explicitly instructed to weight both dimensions equally and to penalize egregious failures in either dimension disproportionately.

\begin{tcolorbox}[
  colback=gray!5, colframe=gray!35,
  fontupper=\scriptsize\ttfamily,
  left=4pt, right=4pt, top=4pt, bottom=4pt,
  boxsep=0pt
]
A text-to-image model generated this image from the prompt: \{prompt\}\\
Evaluate this image and return ONLY a JSON object with "rationale"
key FIRST (reason step-by-step before scoring), then:\\
\{"rationale": "<describe what you see, prompt alignment, artifacts>",\\
\ "semantic\_consistency": <1-10>,\\
\ "perceptual\_quality": <1-10>,\\
\ "overall": <1-10>\}\\
SC: 10=all details match, 8=minor omissions, 6=key details wrong, 4=loosely related, 2=no relation, 1=incomprehensible.\\
PQ: 10=photorealistic/no artifacts, 8=minor artifacts, 6=noticeable distortion, 4=significant artifacts, 2=severe, 1=broken.\\
Overall: weight both terms equally by taking their geometric mean.

\end{tcolorbox}


\begin{table*}[t!]
\centering
\caption{Comparison of CLIP IQA, CLIP Score (SCR), and SSIM on the MJHQ-30K and sDCI datasets across different models. All results are obtained using the official codebases of each method. All methods, except PTQ4DiT and ViDiT-Q, employ GPTQ~\cite{frantar2023gptq} for weight quantization. $\uparrow$ indicates higher is better. DiRotQ substantially improves both quality and similarity metrics across different settings, reinforcing its advantage at low-bit precision.}
\label{tab:visual-quality-results-appendix}
\resizebox{0.99\textwidth}{!}{
\begin{tabular}{clccccccc}
\toprule 
 \multirow{3}{*}[-1.0ex]{\textbf{Model}} & \multirow{3}{*}[-1.0ex]{\textbf{Precision}} & \multirow{3}{*}[-1.0ex]{\textbf{Method}} & \multicolumn{3}{c}{\textbf{MJHQ-30K}} & \multicolumn{3}{c}{\textbf{sDCI}}  \\ \cmidrule(r){4-6}  \cmidrule(l){7-9} 
 & & & \multicolumn{2}{c}{\textbf{Quality}} &  \textbf{Similarity} & \multicolumn{2}{c}{\textbf{Quality}} &  \textbf{Similarity} \\ \cmidrule(r){4-5}  \cmidrule(lr){6-6}  \cmidrule(lr){7-8}  \cmidrule(l){9-9} 
  & & & \textbf{CLIP IQA ($\uparrow$)} & \textbf{CLIP SCR ($\uparrow$)} & \textbf{SSIM ($\uparrow$)} & \textbf{CLIP IQA ($\uparrow$)} & \textbf{CLIP SCR ($\uparrow$)} & \textbf{SSIM ($\uparrow$)}  \\ \midrule \midrule 
  & FP16     & - & 0.944 & 26.8 & - & 0.966 & 26.1 & - \\ \cmidrule{2-9}
  & PTQ4DiT  & INT W4A8 & 0.899 & 25.9 & 0.353 & 0.911 & 25.5 & 0.297 \\
  & Q-DiT    & INT W4A8 & \textbf{0.948} & 26.4 & 0.493 & \textbf{0.967} & 25.9 & 0.421\\
  & \cellcolor[HTML]{e3f4f7}DiRotQ & \cellcolor[HTML]{e3f4f7}INT W4A8 & \cellcolor[HTML]{e3f4f7}0.943 & \cellcolor[HTML]{e3f4f7}\textbf{26.8} & \cellcolor[HTML]{e3f4f7}\textbf{0.754} & \cellcolor[HTML]{e3f4f7}0.965 & \cellcolor[HTML]{e3f4f7}\textbf{26.1} & \cellcolor[HTML]{e3f4f7}\textbf{0.701} \\ \cmidrule{2-9}
  & ViDiT-Q  & INT W4A4 & 0.914 & \textbf{26.7} & 0.345 & 0.945 & \textbf{26.4} & 0.268 \\
  & SVDQuant & INT W4A4 & 0.928 & \textbf{26.7} & 0.656 & 0.949 & 26.0 & 0.574 \\
   & \cellcolor[HTML]{e3f4f7}DiRotQ & \cellcolor[HTML]{e3f4f7}INT W4A4 & \cellcolor[HTML]{e3f4f7}\textbf{0.944} & \cellcolor[HTML]{e3f4f7}\textbf{26.7} & \cellcolor[HTML]{e3f4f7}\textbf{0.727} & \cellcolor[HTML]{e3f4f7}\textbf{0.963} & \cellcolor[HTML]{e3f4f7}26.0 & \cellcolor[HTML]{e3f4f7}\textbf{0.667} \\ \cmidrule{2-9}
  \multirow{-9}{*}{PixArt-$\Sigma$} & SVDQuant & NVFP W4A4 & 0.939 & \textbf{26.7} & 0.699 & 0.957 & \textbf{26.1} & 0.628 \\
  \multirow{-9}{*}{(20 Steps)} & \cellcolor[HTML]{e3f4f7}DiRotQ & \cellcolor[HTML]{e3f4f7}NVFP W4A4 & \cellcolor[HTML]{e3f4f7}\textbf{0.943} & \cellcolor[HTML]{e3f4f7}\textbf{26.7} & \cellcolor[HTML]{e3f4f7}\textbf{0.736} & \cellcolor[HTML]{e3f4f7}\textbf{0.962} & \cellcolor[HTML]{e3f4f7}26.0 & \cellcolor[HTML]{e3f4f7}\textbf{0.676} \\ \midrule \midrule

  & BF16     & - & 0.946 & 26.0 & - & 0.951 & 25.5 & - \\ \cmidrule{2-9}
  & Q-DiT    & INT W4A8 &  0.942 & 25.9 & 0.810 & 0.946 & \textbf{25.5} & 0.751 \\
  & \cellcolor[HTML]{e3f4f7}DiRotQ   & \cellcolor[HTML]{e3f4f7}INT W4A8 & \cellcolor[HTML]{e3f4f7}\textbf{0.946} & \cellcolor[HTML]{e3f4f7}\textbf{26.0} & \cellcolor[HTML]{e3f4f7}\textbf{0.870} & \cellcolor[HTML]{e3f4f7}\textbf{0.951} & \cellcolor[HTML]{e3f4f7}\textbf{25.5} & \cellcolor[HTML]{e3f4f7}\textbf{0.833} \\ \cmidrule{2-9}
  & ViDiT-Q  & INT W4A4 & 0.914 & 22.6 & 0.470 & 0.897 & 25.2 & 0.427 \\
  & SVDQuant & INT W4A4 & \textbf{0.946} & 25.7 & 0.789 & 0.948 & 25.4 & 0.726 \\
  & \cellcolor[HTML]{e3f4f7}DiRotQ & \cellcolor[HTML]{e3f4f7}INT W4A4 & \cellcolor[HTML]{e3f4f7}0.945 & \cellcolor[HTML]{e3f4f7}\textbf{25.9} & \cellcolor[HTML]{e3f4f7}\textbf{0.865} & \cellcolor[HTML]{e3f4f7}\textbf{0.951} & \cellcolor[HTML]{e3f4f7}\textbf{25.5} & \cellcolor[HTML]{e3f4f7}\textbf{0.824} \\ \cmidrule{2-9}
  \multirow{-9}{*}{FLUX.1-dev} & SVDQuant & NVFP W4A4 & \textbf{0.947} & \textbf{25.9} & 0.833 & \textbf{0.951} & 25.4 & 0.784 \\
  \multirow{-9}{*}{(25 Steps)} & \cellcolor[HTML]{e3f4f7}DiRotQ   & \cellcolor[HTML]{e3f4f7}NVFP W4A4 & \cellcolor[HTML]{e3f4f7}\textbf{0.947} & \cellcolor[HTML]{e3f4f7}\textbf{25.9} & \cellcolor[HTML]{e3f4f7}\textbf{0.862} & \cellcolor[HTML]{e3f4f7}\textbf{0.951} & \cellcolor[HTML]{e3f4f7}\textbf{25.5} & \cellcolor[HTML]{e3f4f7}\textbf{0.823} \\ \midrule \midrule

  & BF16     & - & 0.941 & 27.3 & - & 0.967 & 26.7 & - \\ \cmidrule{2-9}
  & PTQ4DiT  & INT W4A8 & 0.876 & 26.2 & 0.315 & 0.932 & 25.9 & 0.246 \\
  & Q-DiT    & INT W4A8 & \textbf{0.943} & \textbf{27.3} & 0.666 & \textbf{0.967} & \textbf{26.8} & 0.594 \\
  & \cellcolor[HTML]{e3f4f7}DiRotQ   & \cellcolor[HTML]{e3f4f7}INT W4A8 & \cellcolor[HTML]{e3f4f7}0.940 & \cellcolor[HTML]{e3f4f7}\textbf{27.3} & \cellcolor[HTML]{e3f4f7}\textbf{0.867} & \cellcolor[HTML]{e3f4f7}\textbf{0.967} & \cellcolor[HTML]{e3f4f7}26.7 & \cellcolor[HTML]{e3f4f7}\textbf{0.825} \\ \cmidrule{2-9}
  & ViDiT-Q  & INT W4A4 & 0.938 & \textbf{27.3} & 0.669 & 0.965 & \textbf{26.8} & 0.598 \\
  & SVDQuant & INT W4A4 & 0.932 & \textbf{27.3} & 0.671 & 0.961 & 26.6 & 0.617 \\
  & \cellcolor[HTML]{e3f4f7}DiRotQ & \cellcolor[HTML]{e3f4f7}INT W4A4 & \cellcolor[HTML]{e3f4f7}\textbf{0.940} & \cellcolor[HTML]{e3f4f7}\textbf{27.3} & \cellcolor[HTML]{e3f4f7}\textbf{0.854} & \cellcolor[HTML]{e3f4f7}\textbf{0.967} & \cellcolor[HTML]{e3f4f7}26.7 & \cellcolor[HTML]{e3f4f7}\textbf{0.813} \\ \cmidrule{2-9}
  \multirow{-9}{*}{SANA-1.6B} & SVDQuant & NVFP W4A4 & 0.939 & \textbf{27.3} & 0.744 & 0.965 & \textbf{26.7} & 0.688 \\
  \multirow{-9}{*}{(20 Steps)} & \cellcolor[HTML]{e3f4f7}DiRotQ & \cellcolor[HTML]{e3f4f7}NVFP W4A4 & \cellcolor[HTML]{e3f4f7}\textbf{0.941} & \cellcolor[HTML]{e3f4f7}\textbf{27.3} & \cellcolor[HTML]{e3f4f7}\textbf{0.846} & \cellcolor[HTML]{e3f4f7}\textbf{0.967} & \cellcolor[HTML]{e3f4f7}\textbf{26.7} & \cellcolor[HTML]{e3f4f7}\textbf{0.805} \\ \bottomrule
  \end{tabular}
}
\end{table*}

\section{Additional results}
\subsection{Visual quality results}
\label{app:visual_quality_results_appendix}
Table~\ref{tab:visual-quality-results-appendix} reports additional perceptual and semantic metrics, including CLIP Score, CLIP IQA, and SSIM. Since SSIM is computed against 16-bit reference outputs, it is undefined for reference rows and is marked as -- in Table~\ref{tab:visual-quality-results-appendix}. On the MJHQ-30K dataset, DiRotQ substantially improves perceptual quality: for PixArt-$\Sigma$ at INT W4A4, it improves SSIM from $0.656$ (SVDQuant) to $0.727$ and CLIP IQA from $0.928$ to $0.944$, while maintaining the same CLIP Score ($26.7$). Similar trends hold for NVFP4, where SSIM increases from $0.699$ to $0.736$. For FLUX.1-dev, DiRotQ achieves substantial gains in SSIM ($0.865$ vs. $0.789$ for INT W4A4) while maintaining or slightly improving CLIP-based metrics. For SANA-1.6B, the improvements are even more pronounced, with SSIM increasing from $0.671$ to $0.854$ (INT W4A4) and from $0.744$ to $0.846$ (NVFP4). On the sDCI dataset, DiRotQ shows consistent improvements in SSIM and maintains competitive CLIP IQA and CLIP Score across models, further confirming its effectiveness under $4$-bit quantization. Figures~\ref{fig:pixart_visual},~\ref{fig:flux_visual}, and~\ref{fig:sana_visual} visualize more qualitative comparisons for PixArt-$\Sigma$, FLUX.1-dev, and SANA-1.6B, respectively.

\subsection{VLM-as-a-Judge per-category results}
\label{app:vlm_as_judge_per_category}
Figure~\ref{fig:vlm_judge_per_category} breaks down the win/tie/loss rates per MJHQ-30K category. Under INT4 quantization (Figure~\ref{fig:vlm_judge_int4_per_category}), DiRotQ is preferred on every one of the 10 categories under both judges, with the largest gaps on stylized content such as art ($51.8\%$ vs. $38.0\%$ under InternVL2-8B) and people ($29.9\%$ vs. $19.3\%$ under Qwen2-VL-7B-Instruct). Under NVFP4 (Figure~\ref{fig:vlm_judge_nvfp4_per_category}), the per-category gaps narrow as both methods approach the FP16 baseline, with DiRotQ still preferred on $7$ of $10$ categories under InternVL2-8B and $6$ of $10$ under Qwen2-VL-7B-Instruct.

\subsection{High-precision tail fraction ablation}
\label{app:hp_sweep}
A central design choice in DiRotQ is the rank ratio $r$, which controls the fraction of PCA-rotated activation channels kept in FP16, while the remaining $1-r$ are quantized to 4-bit. Increasing $r$ improves fidelity but also increases the memory footprint of the FP16 branch and the cost of online rotation, making it important to understand how performance scales with $r$. We sweep $r$ from $5\%$ to $25\%$ and report activation QSNR, averaged over 28 transformer blocks of PixArt-$\Sigma$ on MJHQ-30K, for FFN Up, Self-attention QKV, and Cross-attention Q layers. Results across five denoising timesteps are shown in Figure~\ref{fig:hp_sweep}, with RTN as a reference baseline.

Even with only a $5\%$ FP16 tail, DiRotQ already surpasses RTN by $10$--$20$ dB across all layer types and timesteps, demonstrating that the PCA rotation itself, rather than the size of the high-precision tail, drives most of the quality improvement. Increasing $r$ to $10\%$ yields a further $2$--$3$ dB gain, while larger values provide diminishing returns ($<0.5$ dB per additional $5\%$ in most cases). This saturation arises because PCA concentrates most outlier variance into a small set of principal components, so even a modest FP16 tail captures the dominant quantization error. The trend is consistent across layers and timesteps, suggesting $r=10\%$ as a robust default that avoids per-layer tuning while recovering most of the gap to the full-precision baseline.

\begin{figure}[t]
    \centering
    \includegraphics[width=\linewidth]{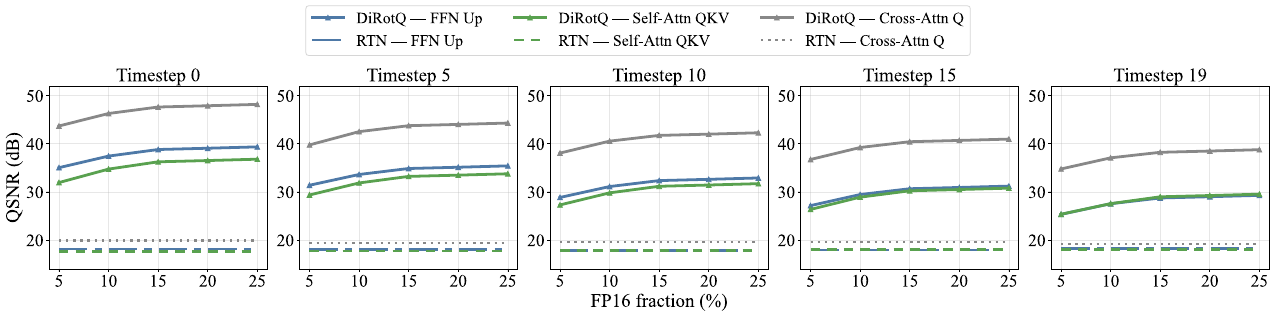}
    \caption{Activation QSNR (dB) on PixArt-$\Sigma$ as a function of FP16 tail fraction, averaged across all 28 transformer blocks, for FFN up, self-attention QKV, and cross-attention Q layers across five denoising timesteps. RTN baselines are shown as horizontal references. QSNR rises sharply up to $r{=}10\%$ and saturates beyond, motivating our choice of $r{=}10\%$ throughout the paper.}
    \label{fig:hp_sweep}
\end{figure}

\begin{figure}[ht!]
    \centering
    \begin{subfigure}{\textwidth}
        \centering
        \includegraphics[scale=0.508]{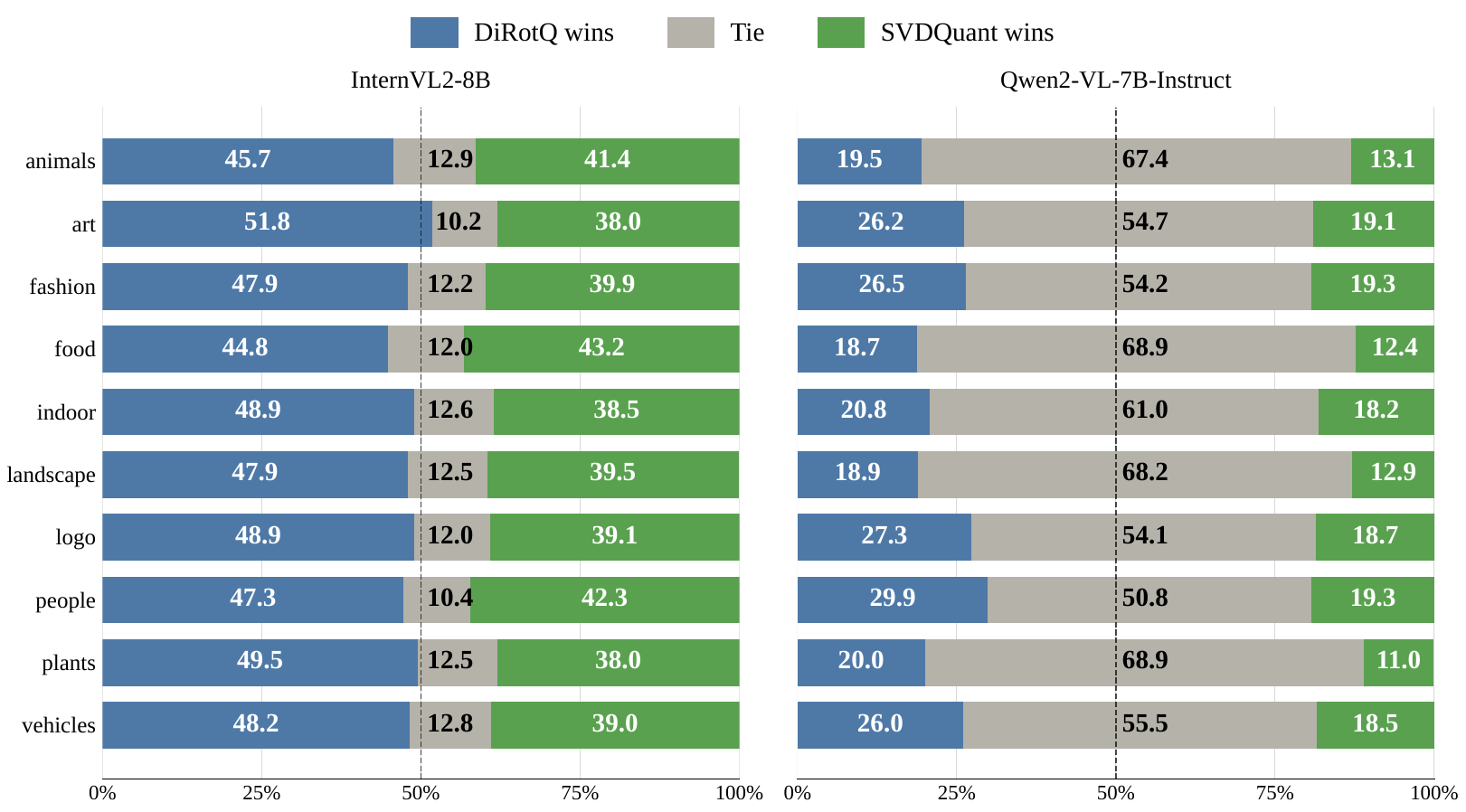}
        
        \vspace*{-1mm} \caption{INT4 quantization.}
        \label{fig:vlm_judge_int4_per_category}
    \end{subfigure}
    
     \vspace*{2mm} 
    \begin{subfigure}{\textwidth}
        \centering
        \includegraphics[scale=0.508]{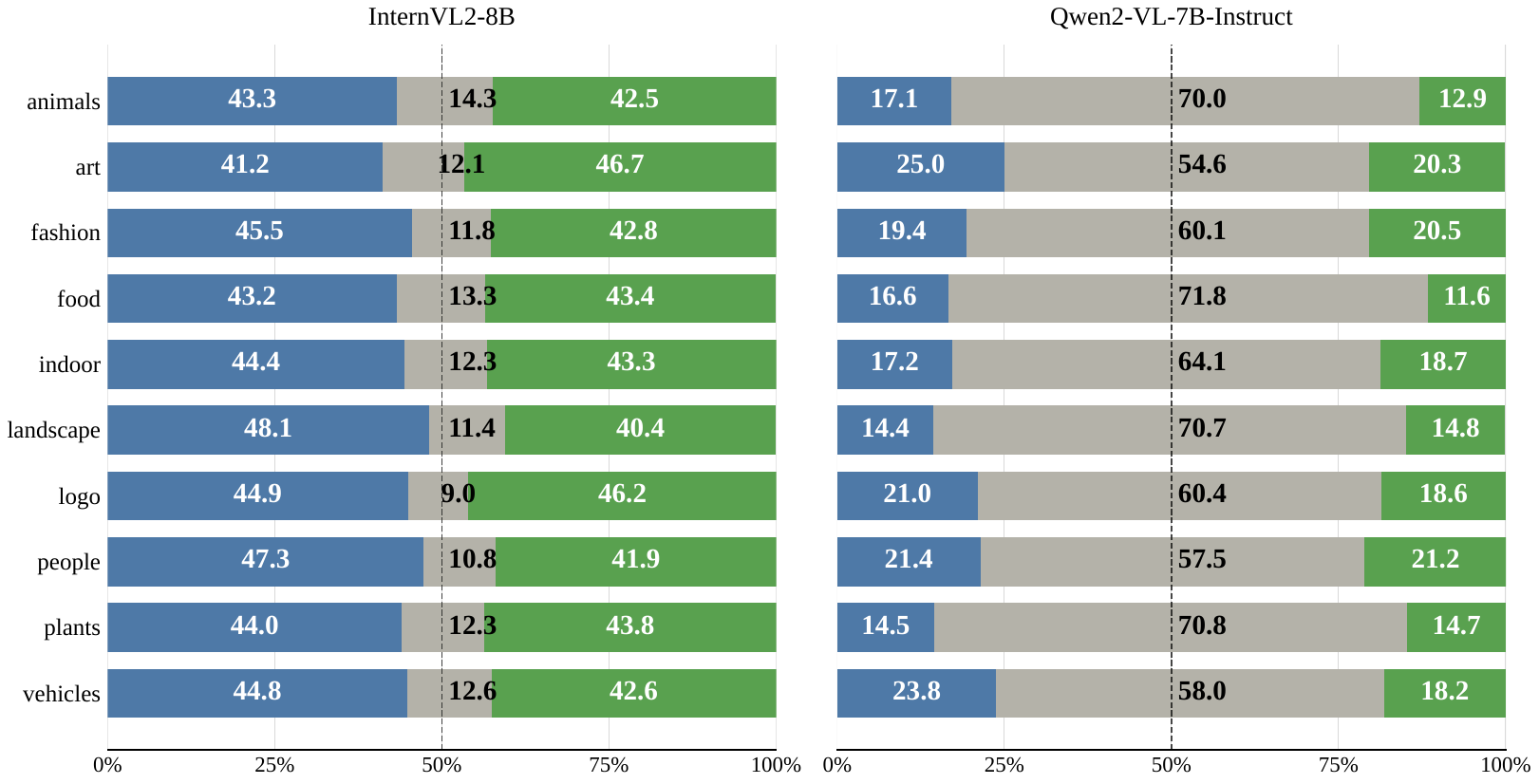}
        
        \vspace*{-1mm} \caption{NVFP4 quantization.}
        \label{fig:vlm_judge_nvfp4_per_category}
    \end{subfigure}
    \caption{Per-category VLM-as-a-Judge pairwise comparison between SVDQuant and DiRotQ on PixArt-$\Sigma$ generations across 10 MJHQ-30K categories, judged by InternVL2-8B (left) and Qwen2-VL-7B-Instruct (right). (a) INT4 quantization. (b) NVFP4 quantization. Dashed line indicates the $50\%$ parity point; differences below $0.01$ are treated as ties.}
    \label{fig:vlm_judge_per_category}
\end{figure}

\begin{figure}[ht]
    \centering
    \begin{subfigure}{\textwidth}
        \centering
        \includegraphics[scale=0.635]{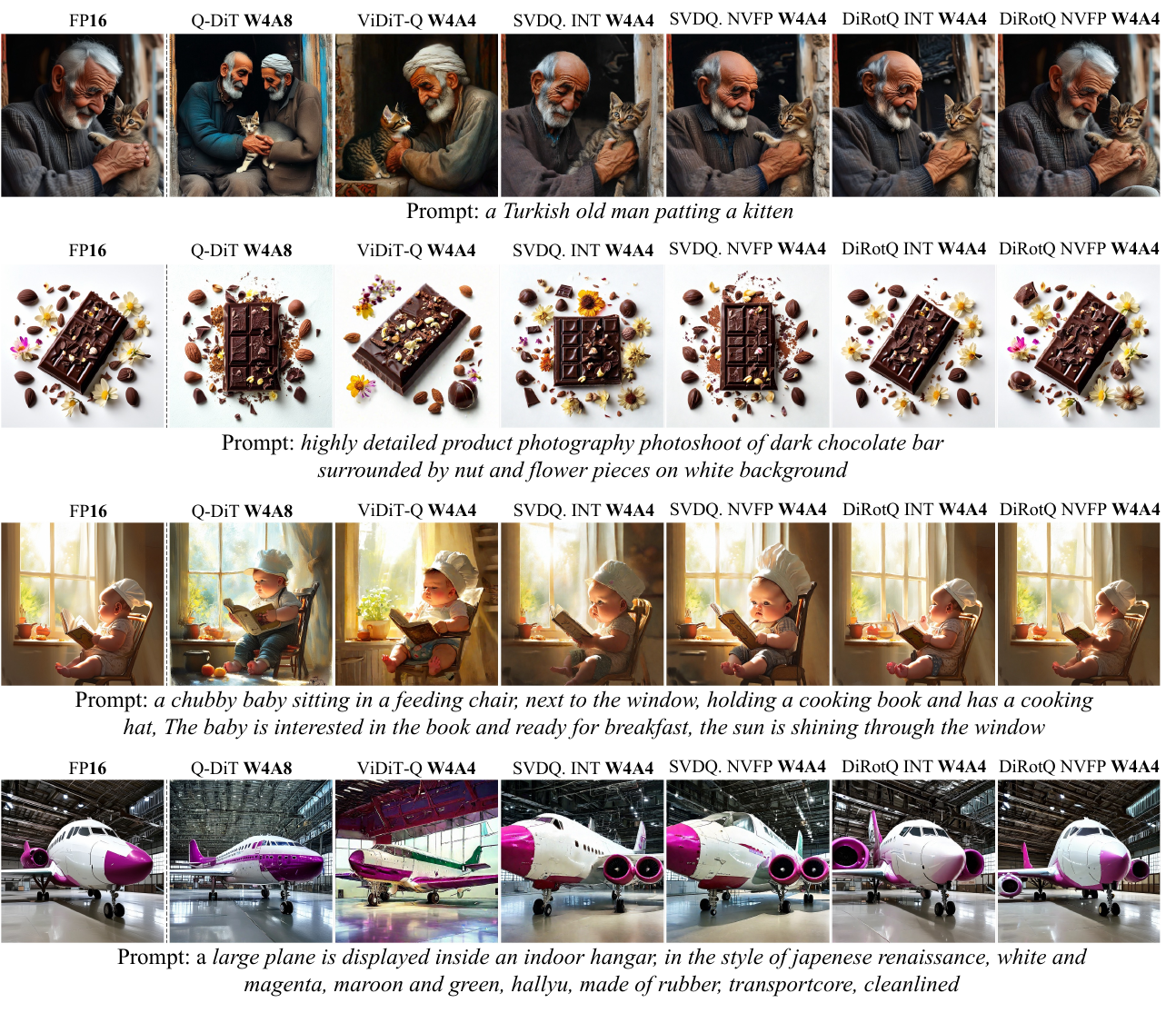}
        
        \vspace*{-3mm} \caption{Qualitative visual results of PixArt-$\Sigma$ on MJHQ-30K.}
    \end{subfigure}
    
     \vspace*{1mm} 
    \begin{subfigure}{\textwidth}
        \centering
        \includegraphics[scale=0.635]{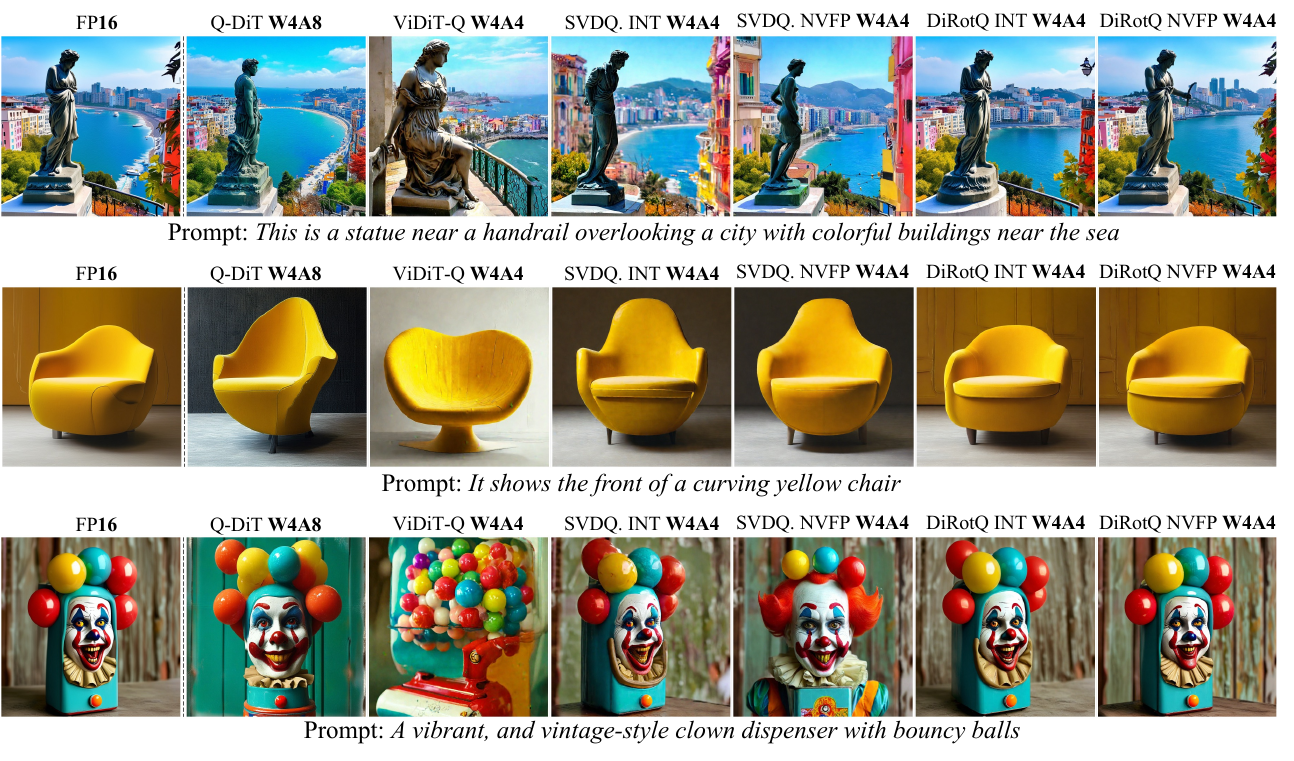}
        
        \vspace*{-2mm} \caption{Qualitative visual results of PixArt-$\Sigma$ on sDCI.}
    \end{subfigure}

     \vspace*{1mm}
    \caption{Qualitative visual results of PixArt-$\Sigma$ on the MJHQ-30K and sDCI datasets. Prompts are reproduced verbatim.}
    \label{fig:pixart_visual}
\end{figure}

\begin{figure}[ht]
    \centering
    \begin{subfigure}{\textwidth}
        \centering
        \includegraphics[scale=0.635]{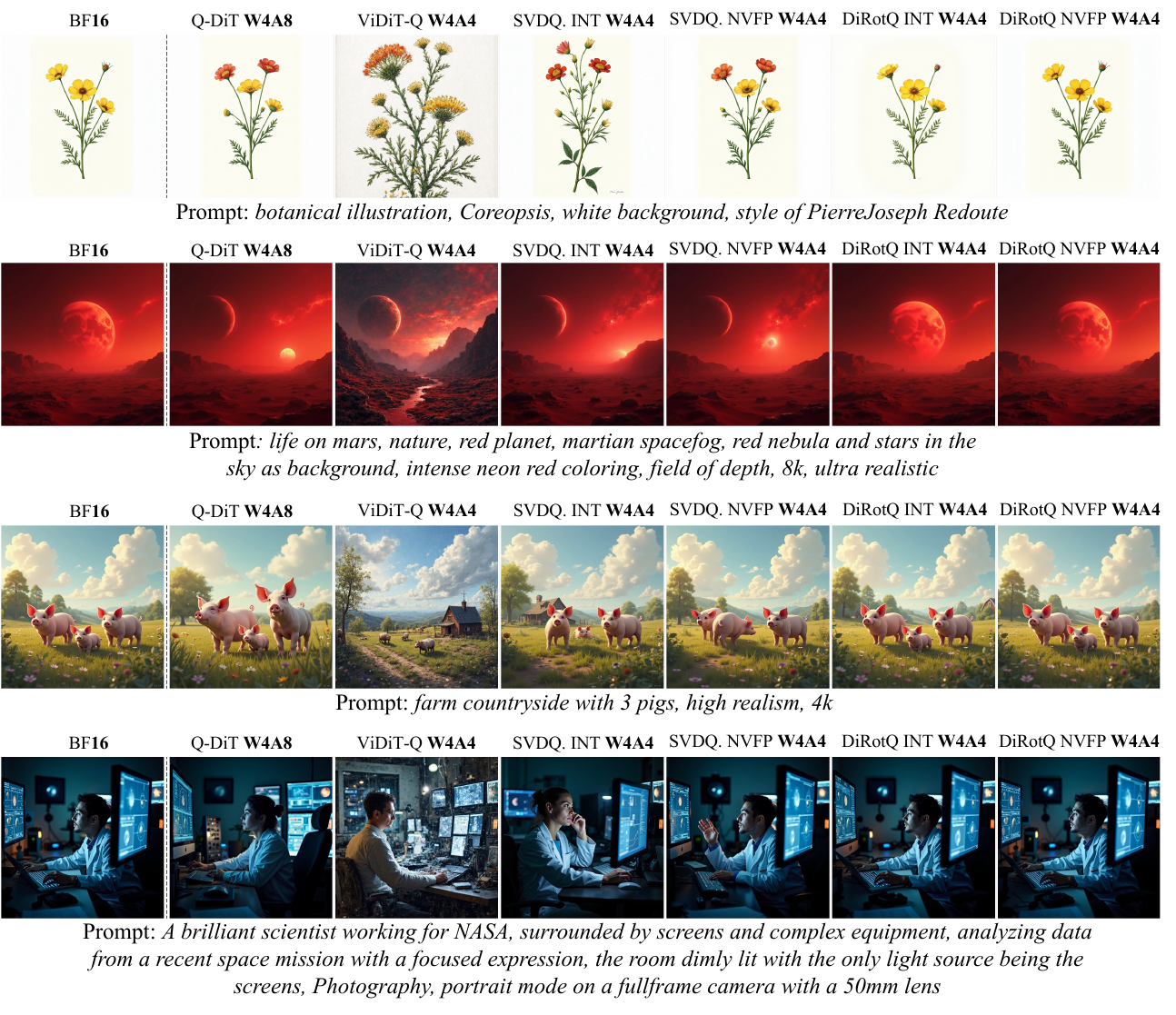}
        
        \vspace*{-1mm} \caption{Qualitative visual results of FLUX.1-dev on MJHQ-30K.}
    \end{subfigure}
    
     \vspace*{2mm} 
    \begin{subfigure}{\textwidth}
        \centering
        \includegraphics[scale=0.635]{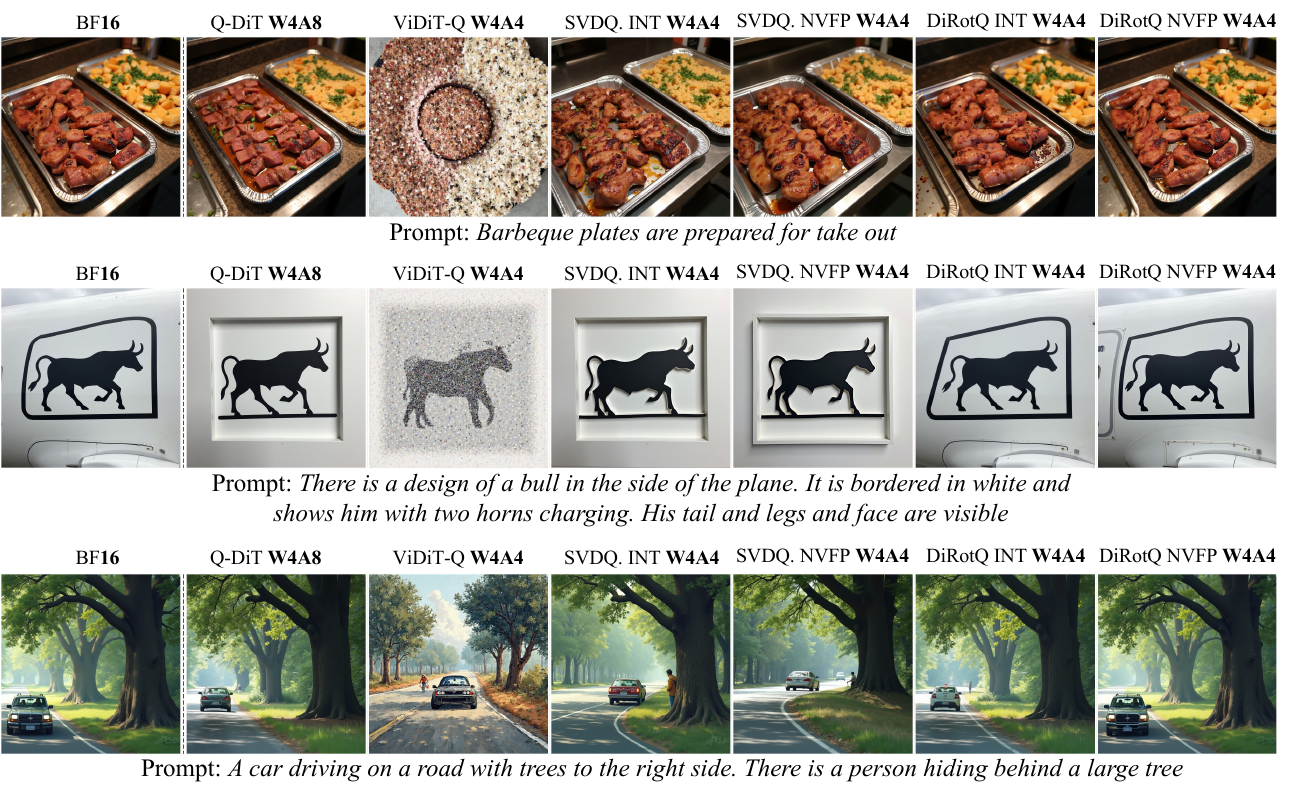}
        
        \vspace*{-1mm} \caption{Qualitative visual results of FLUX.1-dev on sDCI.}
    \end{subfigure}
     
    \caption{Qualitative visual results of FLUX.1-dev on the MJHQ-30K and sDCI datasets. Prompts are reproduced verbatim.}
    \label{fig:flux_visual}
\end{figure}

\begin{figure}[ht]
    \centering
    \begin{subfigure}{\textwidth}
        \centering
        \includegraphics[scale=0.635]{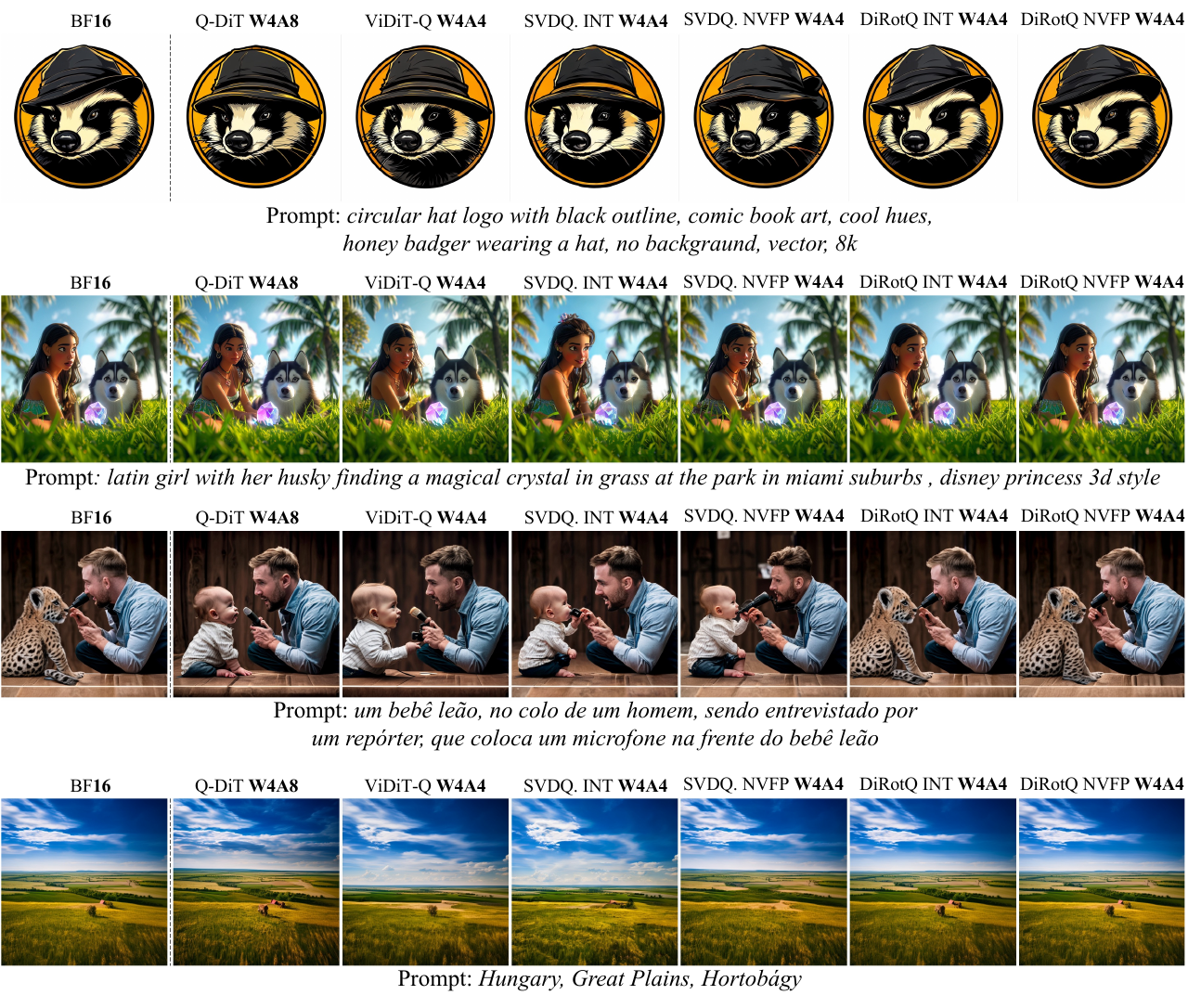}
        
        \vspace*{-1mm} \caption{Qualitative visual results of SANA-1.6B on MJHQ-30K.}
    \end{subfigure}
    
     \vspace*{2mm} 
    \begin{subfigure}{\textwidth}
        \centering
        \includegraphics[scale=0.635]{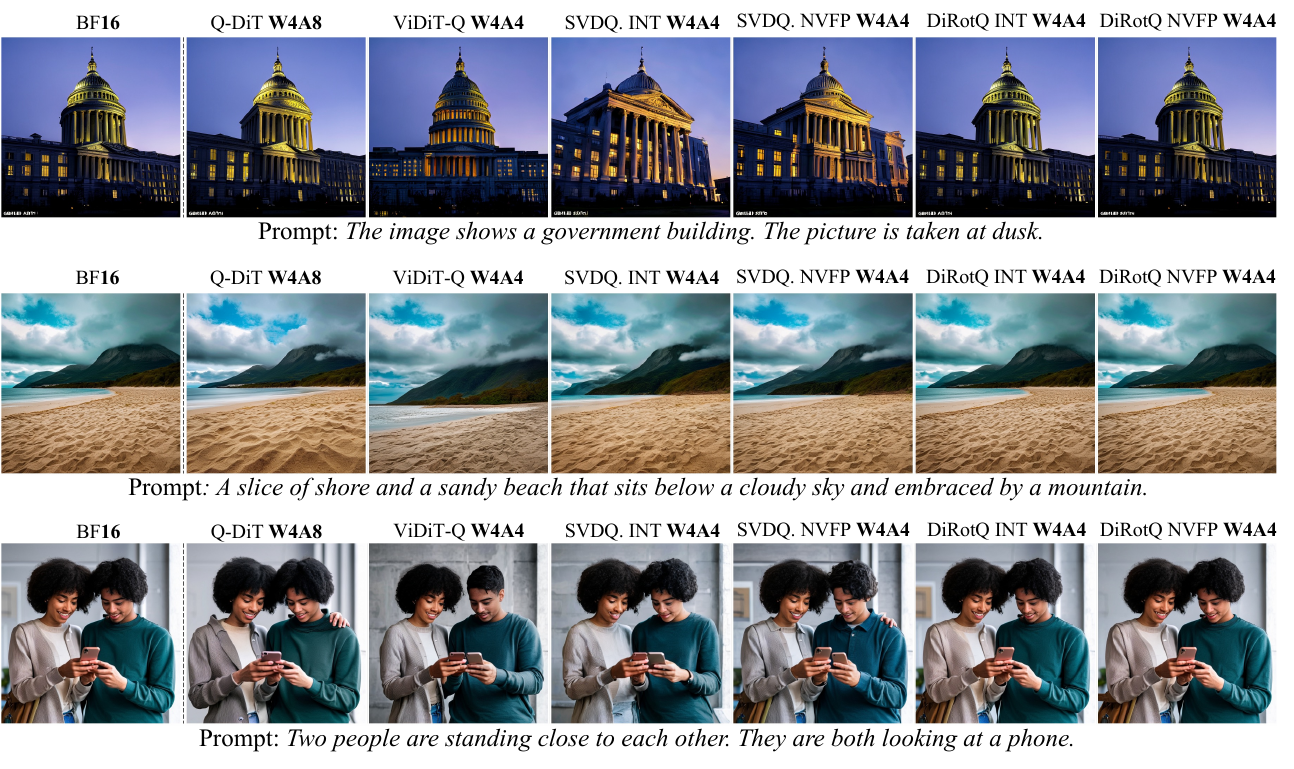}
        
        \vspace*{-1mm} \caption{Qualitative visual results of SANA-1.6B on sDCI.}
    \end{subfigure}
     
    \caption{Qualitative visual results of SANA-1.6B on the MJHQ-30K and sDCI datasets. Prompts are reproduced verbatim.}
    \label{fig:sana_visual}
\end{figure}

\section{Licenses of existing assets}
\label{app:licenses}
Table~\ref{tab:licenses} lists each pre-trained model, dataset, and code asset used in this work, together with its license. Our use is limited to non-commercial academic research (post-training quantization, evaluation, and reporting of generated images), which is consistent with all listed licenses. We do not redistribute any model weights or datasets.

\begin{table}[h]
\centering
\caption{Licenses of existing assets used in this work.}
\label{tab:licenses}
\small
\begin{tabular}{lll}
\toprule
\textbf{Asset} & \textbf{Type} & \textbf{License} \\
\midrule
    PixArt-$\Sigma$~\cite{chen2024pixart} & Model & CreativeML Open RAIL++-M \\
    FLUX.1-dev~\cite{flux2024} & Model & FLUX.1 [dev] Non-Commercial License \\
    SANA-1.6B~\cite{xie2025sana} & Model & NVIDIA NSCL v2-custom; Gemma ToU (text encoder) \\
    Qwen2-VL-7B-Instruct~\cite{wang2024qwen2} & Model & Apache 2.0 \\
    InternVL2-8B~\cite{chen2024expanding} & Model & MIT; Apache 2.0 (LLM component) \\
    CLIP~\cite{radford2021learning} & Model & MIT \\
    \midrule
    COCO Captions~\cite{chen2015microsoft} & Dataset & CC-BY-4.0 (annotations); Flickr ToS (images) \\
    MJHQ-30K~\cite{li2024playground} & Dataset & Research benchmark; Midjourney ToS (images) \\
    sDCI~\cite{urbanek2024picture} & Dataset & CC-BY-NC 4.0 \\
    \midrule
    Diffusers~\cite{von-platen-etal-2022-diffusers} & Code & Apache 2.0 \\
    Triton~\cite{tillet2019triton} & Code & MIT \\
    Nunchaku~\cite{li2025svdquant} & Code & Apache 2.0 \\
    GPTQ~\cite{frantar2023gptq} & Code & Apache 2.0 \\
    Baselines~\cite{wu2024ptq4dit,chen2025q,zhao2025viditq,li2025svdquant} & Code & Apache 2.0 / MIT \\
\bottomrule
\end{tabular}
\end{table}


\end{document}